\renewcommand\nomgroup[1]{%
	\item[\bfseries
	\ifstrequal{#1}{A}{Symbols}{%
		\ifstrequal{#1}{B}{Greek symbols}{%
			\ifstrequal{#1}{C}{Subscripts}{}}}%
	]}
\newcolumntype{L}{>{\centering\arraybackslash}m{3cm}}
\newtheorem{theorem}{Theorem}
\newtheorem{lemma}{Lemma}
\def\BState{\State\hskip-\ALG@thistlm}
\journal{FAIR: Fair Adversarial Instance Re-weighting}
\begin{document}

\begin{frontmatter}



\title{FAIR: Fair Adversarial Instance Re-weighting}
\tnotetext[label1]{}


\author[FON]{Andrija Petrović\tnoteref{label1}}
\ead{aapetrovic@mas.bg.ac.rs}
\author[MATF]{Mladen Nikolić}
\author[FON]{Sandro Radovanović}
\author[FON]{Boris Delibašić}
\author[FON]{Miloš Jovanović}

\address[FON]{University of Belgrade - Faculty of Organizational Sciences, Jove Ilica 154, Belgrade, Serbia}
\address[MATF]{University of Belgrade - Faculty of Mathematics, Studentski Trg 16, Belgrade, Serbia}

\begin{abstract}
With growing awareness of societal impact of artificial intelligence, fairness has become an important aspect of machine learning algorithms. The issue is that human biases towards certain groups of population, defined by sensitive features like race and gender, are introduced to the training data through data collection and labeling. Two important directions of fairness ensuring research have focused on (i) instance weighting in order to decrease the impact of more biased instances and (ii) adversarial training in order to construct data representations informative of the target variable, but uninformative of the sensitive attributes. In this paper we propose a Fair Adversarial Instance Re-weighting (FAIR) method, which uses adversarial training to learn instance weighting function that ensures fair predictions. Merging the two paradigms, it inherits desirable  properties from both -- interpretability of reweighting and end-to-end trainability of adversarial training. We propose four different variants of the method and, among other things, demonstrate how the method can be cast in a fully probabilistic framework. Additionally, theoretical analysis of FAIR models' properties have been studied extensively. We compare FAIR models to 7 other related and state-of-the-art models and demonstrate that FAIR is able to achieve a better trade-off between accuracy and unfairness. To the best of our knowledge, this is the first model that merges reweighting and adversarial approaches by means of a weighting function that can provide interpretable information about fairness of individual instances.
\end{abstract}


\begin{keyword}
Fairness \sep Adversarial training \sep Instance reweighting  \sep Deep learning \sep Classification
\end{keyword}

\end{frontmatter}


\section{Introduction}
\label{Sec:Introduction}
Machine learning algorithms have lead to many recent breakthroughs in different complex tasks that cannot be solved satisfactorily by domain specific algorithms, such as face detection \cite{kumar2019face}, object detection \cite{voulodimos2018deep}, machine translation \cite{singh2017machine}, facial expression recognition
\cite{domadiya2019review}, sport prediction \cite{bunker2019machine}, etc. With this enormous success in practical applications and its growing presence in everyday life, social issues related to machine learning algorithms are becoming increasingly important. One of the most prominent issues is fairness of machine learning algorithms, related to discrimination and bias \cite{hajian2016algorithmic}.

It is well known that in many applications data reflects intended or unintended biases of humans whose actions generated data. Salary prediction \cite{innocenti2016mining}, credit risk prediction \cite{li2019credit}, medical prediction \cite{boyd1996relationship}, personnel planning and recruiting forecasting methods \cite{kim2016data}, are just some of the examples where data, collected from societal interactions, is biased with respect to age, gender, or race. Therefore, machine learning algorithms will extract and learn biases that are present in the data and these can have a strong discriminative impact towards disadvantaged groups.
Improving fairness of biased data and decision procedures based on that data is not only a problem of society, but also a problem of machine learning.  It is critical to guarantee that the prediction obtained by machine learning algorithms is based on appropriate information and that the outcomes are not biased towards certain groups of population defined by sensitive features like race and gender \cite{wang2019approaching}.

Current techniques for improving fairness fall into three different groups: preprocessing techniques \cite{kamiran2012decision,calmon2017optimized}, techniques based on optimization at training time \cite{zafar2019fairness,adel2019one,celis2019classification,kamishima2012fairness}, and post-processing based ones \cite{hardt2016equality, pleiss2017fairness}. State-of-the-art techniques for mitigating bias by preprocessing are based on instance reweighing \cite{kamiran2012data}, a technique that assigns weights to instances as means of controlling their influence on the model during training. The good side of such methods is that weights that the method assigns can be interpreted as indicators of instance fairness. The downside is that the preprocessing procedure is oblivious to
the properties of the downstream learning task, like loss function used, model architecture, etc. That may result in suboptimal weights with respect to that learning task.

Adversarial training has widely been used for finding Nash equilibrium in mini-max (zero-sum) games \cite{goodfellow2014generative, nouiehed2019solving, hsieh2019finding}. Recently, adversarial framework became popular in debiasing deep learning models by introducing two networks, one for predicting output labels and one for predicting sensitive attributes \cite{wadsworth2018achieving, madras2018learning, cevora2020fair, grari2020adversarial}. Both depend on the learnt feature space representation which allows fairly accurate prediction of the output label by the first network, while being maximally uninformative about the sensitive attributes, so that the second network has to fail in its task. While these methods allow for end-to-end training, they do not provide interpretable information on instance fairness, which is desirable.

In this paper, we propose Fair Adversarial Instance Re-weighting (FAIR) -- a novel model for mitigating bias in discriminative dataset by using an adversarial framework to learn an instance reweighing function instead of a new data representation as it is done in previous work. The weighting function can provide interpretable information on instance fairness. Also, FAIR does not perform weighting as preprocessing, but integrates it in the learning procedure so that the learning is performed end-to-end. FAIR consists of three neural networks: the first one is used for determining weights for each instance, the second one for predicting the sensitive attribute, and the third one for predicting the output label. FAIR comes in four variants differing in the weighting method. In the first method (FAIR-scalar), obtained scalar weights are used directly for weighting the log likelihood of corresponding instances, whereas in all other methods instance weights are modelled as random variables parametrized by the weighting network. In the second method (FAIR-Bernoulli) the weights are distributed according to Bernoulli distribution and during learning, score function is used to evaluate the expectation of the log likelihood. The other two methods rely on beta distribution, but they differ in evaluation of the expectation of the log likelihood -- the third one (FAIR-betaSF) uses score function and the fourth one (FAIR-betaREP) relies on reparametrization. Additionally, we discuss how to reduce the variance of FAIR-Bernoulli and FAIR-betaSF using baseline functions. We evaluated our models on four different real-world datasets and compared them to the state-of-the-art techniques. The results demonstrate that FAIR achieved the best results, with respect to fairness and classification performance. Furthermore, to the best of our knowledge, this is the first model that merges reweighting and adversarial approaches relying on a weighting function that can provide interpretable information about fairness of individual instances.

The remainder of the paper is structured as follows. In section~\ref{Sec:Related Work} the related work is reviewed. Adversarial models for debiasing datasets in probabilistic and non probabilistic framework are presented in section ~\ref{Sec:advnet}. The proposed FAIR algorithm with different variants is described in ~\ref{Sec:FAIR}.
Experimental setup and results on real-world applications are shown in sections ~\ref{Sec:exp-evaluation} and ~\ref{sec:ResultsDiscussion}, respectively. Final conclusions are given in section~\ref{Sec:Conclusion}.

\section{Related Work}
\label{Sec:Related Work}

\textbf{Notion of fairness}. In context of decision-making, (un)fairness has several distinct notions, one of the most prominent being {\em disparate impact} \cite{barocas2016big}. It represents a situation in which decisions ($\hat{\mathbf{y}}$) made by classifier are disproportional between instances with different values of sensitive attributes ($\mathbf{s}$). We use three measures of disparate impact. First metric used is \textit{absolute statistical parity difference}:
\begin{equation}
\mathbf{ASD} = |P(\hat{\mathbf{y}}=1|\mathbf{s}=0) - P(\hat{\mathbf{y}}=1|\mathbf{s}=1)|
\label{metric:asd}
\end{equation}
Low values of $\mathbf{ASD}$ mean that both groups have approximately the same probability of being labeled $1$ (e.g., bank loan granted) by the model. In such case, the classifier is said to have statistical parity.
Second metric we used is \textit{absolute equal opportunity difference}:
\begin{equation}
\mathbf{AEOD} = |TPR_{\mathbf{s}=0} - TPR_{\mathbf{s}=1}|
\label{metric:aeod}
\end{equation}
where $TPR$ represents true positive rate (recall) of the prediction model. Recall reflects opportunity, so this measure can be interpreted as a difference of opportunities between unprivileged and privileged group. Value of $\mathbf{AEOD}$ close to 0 is desirable. The third metric that we used is \textit{average odds difference}. Average odds difference can be formulated as:
\begin{equation}
\mathbf{AOD} = \frac{1}{2}(|FPR_{s=unpriv} - FPR_{s=priv}| + |TPR_{s=unpriv} - TPR_{s=priv}|)
\label{metric:aod}
\end{equation}
where $FPR$ represent false positive rate (probability of false alarm), and $TPR$ true positive rate (recall). Values of $\mathbf{AOD}$ close to zero are preferred.

Regarding fairness-aware machine learning algorithms, interested readers are referred to an extensive reviews presented in \cite{friedler2019comparative} and \cite{corbett2018measure}. We focus on two approaches relevant for our work.

\textbf{Instance Reweighting}. Instance reweighting has shown impressive results, although the idea is relatively simple \cite{feldman2015certifying, krasanakis2018adaptive, ren2018learning, shu2019meta, jiang2018mentornet}. As a preprocessing technique, it was traditionally used for the class imbalance problem by assigning larger weights to instances of a  lower cardinality class, so that the learning algorithm gives more importance to that class. This idea can be applied to the fairness problem as well -- it assigns lower importance to unfair examples or removes them from the learning process. More specifically, those examples will have a lower impact on the likelihood function one tries to optimize.
The simplest approach is to assign weights to instances so that sums of weights per value of a  sensitive feature is the same and all instances from a group have the same weight \cite{kamiran2012data}. That approach was improved in \cite{krasanakis2018adaptive} by utilizing adaptive sensitive reweighting procedure. One can use variational fair auto encoder with Maximum Mean Discrepancy \cite{louizos2015variational} which calculates distances between distributions using kernels. It is worth noticing that instance reweighting has shown to have lower disparate impact \cite{feldman2015certifying} compared to not applying any instance weighting strategy. An advantage of this class of methods is that the weights can be interpreted as indicators of individual instance fairness. However, the training is not end-to-end. To apply some instance reweighting strategy one needs to perform a two step procedure -- first to obtain instance weights, and then to use the weights by the learning algorithm. This is a drawback of this approach since the weighting procedure is oblivious of the model representation and learning algorithm and therefore might choose suboptimal weights for them.

\textbf{Adversarial training}. Adversarial training provides a framework for mitigating biases by learning new data representation from which it is possible to predict the target variable, but not possible to predict the sensitive attribute. This approach creates a trade-off between two goal functions and therefore reaches Nash equilibrium \cite{goodfellow2014generative}.
Adversarial training for fairness was first presented in \cite{zhang2018mitigating}. Similar model was applied to recidivism prediction in order to remove racial bias \cite{wadsworth2018achieving}.
An important approach of such kind is Fair Adversarial Discriminative model (FAD) \cite{adel2019one}. Moreover, theoretical analysis of the relationship between the label classifier performance and the adversary’s ability to predict the sensitive attribute value is provided. Also, in the same paper, a variation of the adversarial learning procedure is developed to increase diversity among elements of each mini-batch of the gradient descent training, in order to achieve a representation that does not suffer from mode collapse.
Another theoretical analysis of solving fairness problem via adversarial approach is presented in \cite{madras2018learning}.
Another adversarial approach focuses on learning to select non-sensitive features on per instance basis \cite{wang2019approaching}. The adversarial approach is employed to minimize the correlation between selected features and sensitive information. While adversarial approach enables end-to-end training it does not provide any interpretable information on the individual fairness of instances.

\vspace{2mm}
Our approach tries to keep the best from both worlds. It provides interpretable information on instance fairness like reweighting approach and allows end-to-end training like adversarial approach.

\section{Fairness via Adversarial Network}
\label{Sec:advnet}
The dataset given by $D= \{(\mathbf{x}_i, \mathbf{y}_i, \mathbf{s}_i) \}_{i=1}^N$, consists of input features $\mathbf{x}$, the true label (or the target variable) $\mathbf{y}$ and sensitive features $\mathbf{s}$. It is generated by joint true underlying distribution $D \sim P(\mathbf{x},\mathbf{y},\mathbf{s})$. Generally, an unfair discriminative model predicts the label $\mathbf{\hat{y}}$ based on both input $\mathbf{x}$ and sensitive  features $\mathbf{s}$, which can lead to bias. A naive approach to ensuring fairness would be to eliminate sensitive features $\mathbf{s}$ from the dataset. However, the information contained in the sensitive features can often be approximated from other input features $\mathbf{x}$. For example, location of residence correlates with race, although it is not obviously sensitive itself. In this section, we discuss already mentioned fair adversarial discriminative model (FAD) in more detail, as a relevant baseline.
Also, we propose our probabilistic formulation of this model based on normalizing flows. This is not the main contribution of our paper, though. Instead, we use it as another reasonable baseline for our other probabilistic models.

\subsection{FAD Model}

The architecture of the FAD model consists of one shared network and two task specific networks. The goal of the shared network is to map input features $\mathbf{x}$ to their new representation $\mathbf{z}=f_\theta(\mathbf{x})$, so that the obtained representation $\mathbf{z}$ is uninformative of sensitive features $\mathbf{s}$, but includes information needed to predict label $\mathbf{y}$. The first task specific network is a predictor $g_\phi(\mathbf{z})$ of the output label $\mathbf{y}$, whereas the second task specific network $h_\psi(\mathbf{z})$ estimates sensitive features $\mathbf{s}$. Since the sensitive information may also be important for estimating labels, there is a trade-off between model fairness and the accuracy of the prediction, related to the mapping from input feature space $\mathbf{x}$ to representation space $\mathbf{z}$.
Fairness in the FAD method is achieved through adversarial learning of the mapping $g_\theta(\mathbf{x})$ and a classifier $h_\psi(\mathbf{z})$, while learning the predictor $g_\phi(\mathbf{z})$. This ensures that the accuracy is not fully sacrificed for fairness. In other words, neural networks $f_\theta(\mathbf{x})$ and $g_\phi(\mathbf{z})$ play a minimax game with the classifier $h_\psi(\mathbf{z})$. We denote probability functions modelled by these networks as $P_\phi(\mathbf{y}|\mathbf{x})$ and $P_\psi(\mathbf{s}|\mathbf{x})$. Formally, the adversarial problem of FAD model is:
$$\min_{\theta,\phi}\max_{\psi} \mathbb{E}_{\substack{\mathbf{x},\mathbf{y},\mathbf{s} \sim P(\mathbf{x},\mathbf{y},\mathbf{s})\\\mathbf{z}=f_\theta(\mathbf{x})}} \left[\alpha\log P_{\psi}(\mathbf{s}|\mathbf{z}) - \log P_{\phi}(\mathbf{y}|\mathbf{z})\right]$$
where $\alpha$ is a hyper-parameter for tuning the trade-off between the model fairness and accuracy. Increased value of $\alpha$ influences model to be more focused on fairness and, consequently, representation $\mathbf{z}$ will be less informative of sensitive features $\mathbf{s}$, but also, to some extent, of true label $\mathbf{y}$.

\subsection{Probabilistic framework with normalizing flows}
\label{Sec:advenet-nf}
In case of the FAD method, the representation $\mathbf{z}=f_\theta(\mathbf{x})$ is an output of a neural network. We propose a fully probabilistic method (FAD-prob) based on the FAD method by considering the representation $\mathbf{z}$ as a random latent variable and modeling its distribution. Representing the hidden space as a random variable has several advantages. The main one is related to the possibility to marginalize over latent variable space and obtain better predictive performance \cite{tan2010social}. Moreover, it can provide a possibility to predict structured outputs representing sensitive features and labels \cite{koller2009probabilistic}.

The conditional probability distributions of sensitive features $\mathbf{s}$ and of output labels $\mathbf{y}$,  given inputs $\mathbf{x}$ can be obtained by marginalization of joint distributions $P(\mathbf{s},\mathbf{z})$ and $P(\mathbf{y},\mathbf{z})$, respectively as
$$ P(\mathbf{y}|\mathbf{x}) = \int_\mathbf{z} P(\mathbf{y}|\mathbf{z}) P(\mathbf{z}|\mathbf{x}) d\mathbf{z} = \mathbb{E}_{\mathbf{z} \sim P(\mathbf{z}|\mathbf{x})} [P(\mathbf{y}|\mathbf{z})]$$
$$ P(\mathbf{s}|\mathbf{x}) = \int_\mathbf{z} P(\mathbf{s}|\mathbf{z}) P(\mathbf{z}|\mathbf{x}) d\mathbf{z} = \mathbb{E}_{\mathbf{z} \sim P(\mathbf{z}|\mathbf{x})} [P(\mathbf{s}|\mathbf{z})]$$
Marginalization can be performed using reparametrization trick and normalizing flows \cite{kingma2019introduction}. The reparametrization of the latent variable can be performed as $\mathbf{z} = f(\mathbf{\mu(\mathbf{x}})+ L(\mathbf{x})\cdot \mathbf{\epsilon})$
where $f$ is a nonlinear mapping of variable $\mathbf{z}$ obtained by assuming normal distribution with mean $\mu(\mathbf{x})$ and covariance matrix $\Sigma(\mathbf{x})$, which can be factorized as $L^T(\mathbf{x})L(\mathbf{x})$ by Cholesky decomposition.

Based on this, the overall adversarial objective function of FAD-prob model is:
$$\min_{\theta,\phi}\max_{\psi}\mathbb{E}_{\substack{\mathbf{x},\mathbf{y}, \mathbf{s} \sim P(\mathbf{x},\mathbf{y},\mathbf{s})\\\mathbf{z} \sim P_\theta(\mathbf{z}|\mathbf{x})}} \left[\alpha \log P_{\psi}(\mathbf{s}|\mathbf{z})  - \log P_{\phi}(\mathbf{y}|\mathbf{z})\right]$$

\section{Fair Adversarial Instance Re-weighting - FAIR}
\label{Sec:FAIR}
Unfairness which AI models learn is introduced through data instances containing unfair decisions. Therefore, we strive to recognize if a particular instance in a dataset is unfair. The main principle of FAIR is to reweight log likelihood of each instance, according to the trade-off between fairness and prediction performance, in order to obtain a fair and useful predictor of the target variable.

FAIR consists of three neural networks: the weighting network $f_\theta(\mathbf{x})$, the predictor network $g_\phi(\mathbf{x})$, and the sensitive network $h_\psi(\mathbf{x})$.
For an instance $\mathbf{x}$ the weighting network outputs the weight of that instance $w_\mathbf{x}\in[0,1]$, while the predictor network and the sensitive network output predictions of the output labels $\mathbf{y}$ and the sensitive features $\mathbf{s}$, respectively. In order to incorporate the fairness objective, FAIR weights log likelihood of instances, so that the ones that are strongly informative of the sensitive features, but not of the target variable are assigned low weights and the ones that are informative of the target variable, but not of the sensitive attributes are assigned high weights.
The weighting network is not used during inference, but can be helpful for assessing new instances.

Based on different weighting techniques, we present four different FAIR weighting methods. The first one, FAIR-scalar is based on non-probabilistic weighting framework, whereas FAIR-Bernoulli, FAIR-betaSF, and FAIR-betaREP are based on probabilistic framework. The graphical representation of FAIR with different weighting methods are given in Fig.~\ref{fig:Fig1}.

\begin{figure*}[t!]
	\vskip 0.2in
	\center
	\includegraphics[angle=0, width=1\textwidth, height = 2.7in]{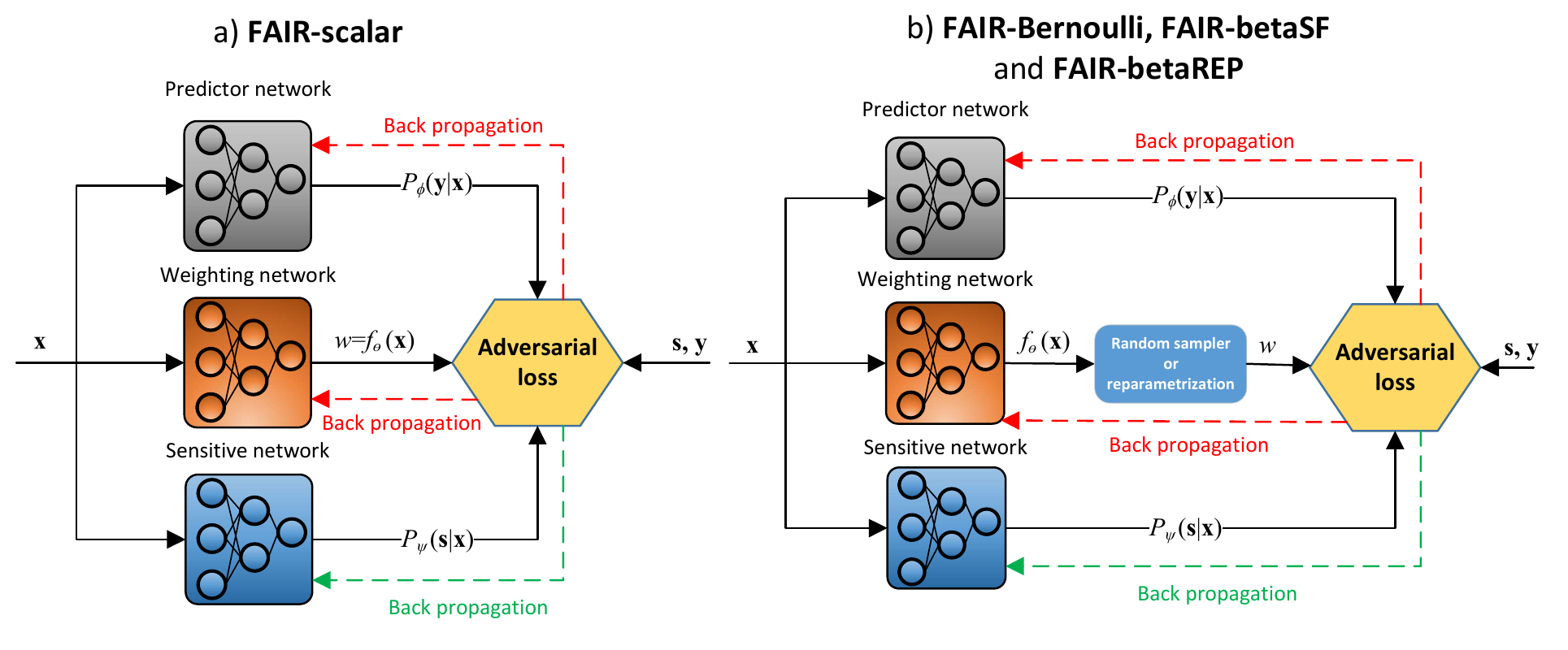}
	\captionsetup{justification=centering}
	\caption{Graphical representations of FAIR with probabalistic and non-probabalistic frameworks}
	\label{fig:Fig1}
	\vskip -0.2in
\end{figure*}

\subsection{FAIR -- non-probablistic framework}
Assume that each instance $\mathbf{x}$ is assigned a scalar weight $f_\theta(\mathbf{x})\in[0,1]$ by a weighting network. Then, FAIR-scalar adversarial problem is given by:
\begin{equation}
\label{Eq:Loss-scalar}
(\theta^*,\phi^*,\psi^*) = \arg\min_{\theta,\phi}\max_{\psi}\mathbb{E}_{\substack{\mathbf{x},\mathbf{y},\mathbf{s} \sim P(\mathbf{x},\mathbf{y},\mathbf{s})\\w=f_\theta(\mathbf{x})}} [w \cdot(\alpha\log P_{\psi}(\mathbf{s}|\mathbf{x})
- \log P_{\phi}(\mathbf{y}|\mathbf{x}))]
\end{equation}
Similarly to FAD model, the hyperparameter $\alpha$ controls the trade-off between fairness and predictive performance of the predictor network, but this trade-off will be given further theoretical analysis.

\subsection{FAIR -- probabilistic framework}
\label{Sec:FAD-pf}
In the the case of FAIR with probabilistic approach to weighting, it is assumed that  weights of instances are random variables. In contrast to FAIR-scalar, in probabilistic framework, output of the weighting network $f_\theta$ models a probability distribution of instance weights: $P(w_\mathbf{x}|\mathbf{x})$. Consequently, we can use different probability distribution models. We consider Bernoulli (FAIR-Bernoulli) and beta distribution (FAIR-betaSF and FAIR-betaREP).

FAIR-Bernoulli assumes that log likelihoods of instances, with respect to sensitive features $\log P_{\psi}(\mathbf{s}|\mathbf{x})$ and labels $\log P_{\phi}(\mathbf{y}|\mathbf{x})$ are weighted by integers $w_\mathbf{x} \in \{0,1\}$ such that it holds $P_\theta(w_\mathbf{x}=1|\mathbf{x})=f_\theta(x)$, meaning that the conditional probability of weights is a Bernoulli distribution ${\cal B}(f_\theta(\mathbf{x}))$. The FAIR-Bernoulli adversarial loss ${\cal L}^{\cal B}_{\alpha}(\theta, \phi, \psi)$ is given by:
\begin{gather}
\label{Eq:Loss_prob}
\mathbb{E}_{\substack{\mathbf{x},\mathbf{y}, \mathbf{s} \sim P\left(\mathbf{x},\mathbf{y}, \mathbf{s}\right)\\w \sim P_\theta(w|\mathbf{x})}}  \left[ w \cdot(\alpha\log P_{\psi}\left(\mathbf{s}|\mathbf{x}\right)
-  \log P_{\phi}\left(\mathbf{y}|\mathbf{x}\right))\right]
\end{gather}
and the corresponding adversarial problem is $(\theta^*,\phi^*,\psi^*) = \arg\min_{\theta,\phi}\max_{\psi}{\cal L}^{\cal B}_\alpha(\theta,\phi,\psi)$
where the superscript ${\cal B}$ emphasizes Bernoulli assumption.

In order to optimize the loss, gradients with respect to $\theta$, $\phi$, and $\psi$ need to be computed. Gradients with respect to $\phi$ and $\psi$ are computed by standard backpropagation. However, the gradient with respect to $\theta$ is trickier since $\theta$ defines the distribution of $w$ over which the expectation is taken. Therefore, we derive the gradient of the adversarial loss $\nabla_{\theta}{\cal L}_{\alpha}(\theta, \phi, \psi)$ for FAIR-Bernoulli and FAIR-betaSF as follows:
$$\nabla_{\theta}{\cal L}_{\alpha}(\theta, \phi, \psi) = \nabla_{\theta} \mathbb{E}_{\substack{\mathbf{x},\mathbf{y}, \mathbf{s} \sim P(\mathbf{x},\mathbf{y}, \mathbf{s})\\w \sim P_\theta(w|\mathbf{x})}} \bigg[ w \cdot ( \alpha \log P_{\psi}(\mathbf{s}|\mathbf{x}) - \log P_{\phi}(\mathbf{y}|\mathbf{x}))\bigg]$$
The gradient operator $\nabla_{\theta}$ can be propagated through the expectation as:
\begin{align*}
\begin{split}
\nabla_{\theta}{\cal L}_{\alpha}(\theta, \phi, \psi) =& \mathbb{E}_{\mathbf{y},\mathbf{x},\mathbf{s}} \bigg[ \int_w \nabla_{\theta}  P_\theta(w|\mathbf{x}) \cdot w \cdot ( \alpha \log P_{\psi}(\mathbf{s}|\mathbf{x}) - \log P_{\phi}\left(\mathbf{y}|\mathbf{x}\right))dw \bigg]
\end{split}
\end{align*}
Gradient of the distribution $P_\theta(w|\mathbf{x})$ can be transformed as:
\begin{align*}
\begin{split}
\nabla_{\theta}  P_\theta(w|\mathbf{x}) &= P_\theta(w|\mathbf{x}) \cdot \frac{\nabla_{\theta}P_\theta(w|\mathbf{x}) }{ P_\theta(w|\mathbf{x})}
\\
&= P_\theta(w|\mathbf{x}) \cdot \nabla_{\theta} \log P_\theta(w|\mathbf{x})
\end{split}
\end{align*}

Following this transformation, the final form of the gradient of the loss with respect to $\theta$ can be represented as:
\begin{align}
\label{Eq:Gradient}
\begin{split}
\mathbb{E}_{\substack{\mathbf{x}, \mathbf{s}, \mathbf{y}\sim P(\mathbf{x},\mathbf{s}, \mathbf{y})\\w \sim P_\theta(w|\mathbf{x})}} \Big[w\cdot \nabla_{\theta}  \log P_\theta(w|\mathbf{x}) \cdot (\alpha\log P_\psi(\mathbf{s}|\mathbf{x}) - \log P_\phi(\mathbf{y}|\mathbf{x})) \Big]
\end{split}
\end{align}
which is a suitable form as it allows the use of the stochastic gradient descent.

Next, we assume that weights $w_\mathbf{x}$ are random variables distributed according to the beta distribution which, in contrast to the case of FAIR-Bernoulli, takes any value from the interval $[0,1]$. The outputs of the weighting network are the parameters $\alpha_\mathbf{x}$ and $\beta_\mathbf{x}$ of the beta distribution. The adversarial loss as defined by Eq.~\ref{Eq:Loss_prob}, but with beta distribution assumed instead of Bernoulli. We denote corresponding loss by ${\cal L}^\beta_\alpha(\theta,\phi,\psi)$ where $\beta$ in the superscript emphasizes the assumed distribution. In optimization, the gradient $\nabla_{\theta}{\cal L}^\beta_{\alpha}(\theta, \phi, \psi)$
can be evaluated either by using score function as in Eq.~\ref{Eq:Gradient} or by the reparametrization trick of beta distribution as shown in \cite{shah2015empirical}. These two approaches we name FAIR-betaSF and FAIR-betaREP respectively.

Pseudocode of probabalistic FAIR with score function (FAIR-Bernoulli and FAIR-betaSF) is presented in Algortihm~\ref{alg:FAIR-sf}. FAIR losses are defined in terms of expectations. However, with finite samples, expectation is always approximated by sample mean, which we use in the algorithm.

\begin{algorithm}
	\caption{Probabilistic FAIR with score function}
	\label{alg:FAIR-sf}
	\begin{algorithmic}
		\State {\bfseries Input:} learning rates $\gamma_\theta, \gamma_\phi, \gamma_\psi$, dataset $D$,  hyperparameter $\alpha$, probabilistic model ${\cal P}$ of instance weights, number of iterations $M$
		\State {\bfseries Output:} parameters $\theta,\phi,\psi$
		\vspace{2mm}
		\State Initialize $\theta$, $\phi$, $\psi$
		\For{i = 1 to M}
		\State Sample a mini-batch $B\subseteq D$
		\State Sample $w_\mathbf{x}\sim {\cal P}(f_\theta(\mathbf{x}))$ for each $\mathbf{x}$ in $B$
		\State $d_{\theta} \leftarrow \gamma_\theta\frac{1}{|B|} \sum_{(\mathbf{x},\mathbf{y},\mathbf{s})\in B}\left[w_\mathbf{x}\nabla_{\theta}\log P_\theta(w_\mathbf{x}|\mathbf{x})\cdot\right.$
		
		\hspace{3cm}$\left.(\alpha\log P_\psi(\mathbf{s}|\mathbf{x})-\log P_\phi(\mathbf{y}|\mathbf{x}))\right]$
		\State $d_{\phi} \leftarrow \gamma_\phi\nabla_{\phi}{\cal L}^{\cal P}_\alpha(\theta,\phi,\psi,B)$
		\State $d_{\psi} \leftarrow -\gamma_\psi\nabla_{\psi}{\cal L}^{\cal P}_\alpha(\theta,\phi,\psi,B)$
		\State $(\theta,\phi,\psi) \leftarrow (\theta,\phi,\psi) - (d_{\theta}, d_{\phi}, d_{\psi})$
		\EndFor
	\end{algorithmic}
\end{algorithm}

\subsubsection{FAIR-Bernoulli and FAIR-betaSF with baselines}
\label{app:baselines}

Baseline functions are a commonly used tool to reduce the variance of the estimate of the gradient in reinforcement learning algorithms. It is shown that introducing a baseline in loss function does not introduce additional bias into the model \cite{sutton2018reinforcement}. Here we explain how these techniques can be incorporated in FAIR models. These modifications are not included in experimental evaluation since the technique is already known and our main goal is to compare basic FAIR variants against existing baselines, but we still derive algorithms with baselines as they might be of practical importance.

Already discussed adversarial loss is augmented by adding another term -- the baseline loss:
$${\cal L}_{\alpha}(\mu) = \mathrm{Var} \left[w\cdot \nabla_{\theta_g}\log P_g(w|\mathbf{x}) \cdot\left( \alpha \log P_\psi(\mathbf{s}|\mathbf{x}) - \log P_\phi(\mathbf{y}|\mathbf{x}) \right)
     - b_\mu(\mathbf{x}) \right]$$
The baseline loss includes the gradient of the adversarial loss, since its purpose is to reduce the variance of the estimate of that gradient.
Keeping in mind that the variance can be represented as $\mathrm{Var}[x] = \mathbb{E}[x^2] - \mathbb{E}[x]^2$ (where squaring of a vector $v$ means $v^Tv$), the baseline loss can be simplified due to the fact that it holds
$\mathbb{E}_{P_\theta(w|\mathbf{x})}[\nabla_{\theta}\log P_\theta(w|\mathbf{x}) b_\mu(\mathbf{x})]=0$ \cite{sutton2018reinforcement}:
\begin{gather*}
	\begin{split}
	&{\cal L}_{\alpha}(\mu) = \mathbb{E} \left[\left(w\cdot\nabla_{\theta}\log P_\theta(w|\mathbf{x})\cdot (\alpha \log P_\psi(\mathbf{s}|\mathbf{x}) - \log P_\phi(\mathbf{y}|\mathbf{x}))  - b_\mu(\mathbf{x})\right)^2 \right]
	\end{split}
\end{gather*}
where the expectation is with respect to $(\mathbf{x},\mathbf{y},\mathbf{s})\sim P(\mathbf{x},\mathbf{y},\mathbf{s})$ and $w\sim P(w|\mathbf{x})$.
Furthermore, we assumed the independence among the values involved in the expectation, and thus the expectation can be represented as:
\begin{equation*}
	\begin{split}
	{\cal L}_{\alpha}(\mu) = \mathbb{E} \left[\left(\nabla_{\theta}\log P_\theta(w|\mathbf{x})\right)^2 \right] \cdot \mathbb{E} \left[ \left(w \cdot (\alpha \log P_\psi(\mathbf{s}|\mathbf{x}) - \log P_\phi(\mathbf{y}|\mathbf{x})) - b_\mu(\mathbf{x})\right)^2 \right]
	\end{split}
\end{equation*}

Considering that the first factor is constant with respect to $b_\mu$ it can be omitted, so that the final form of the loss ${\cal L}_{\alpha}(\mu)$  is:
\begin{equation*}
\begin{split}
\mathbb{E} \left[\left(w \cdot (\alpha \log P_\psi(\mathbf{s}|\mathbf{x}) - \log P_\phi(\mathbf{y}|\mathbf{x})) - b_\mu(\mathbf{x})\right)^2\right]
\end{split}
\end{equation*}

Then, the gradient $\nabla_\mu{\cal L}_\alpha(\mu)$ is:
\begin{equation*}
\begin{split}
-\mathbb{E} \left[w \cdot \nabla_\mu b_\mu(\mathbf{x})\cdot\left(\alpha \log P_\psi(\mathbf{s}|\mathbf{x}) - \log P_\phi(\mathbf{y}|\mathbf{x}) - b_\mu(\mathbf{x})\right)\right]
\end{split}
\end{equation*}

In Fig.~\ref{fig:Fig2} graphical representation of FAIR-betaSF with baseline is shown. The pseudo-code is presented in Algorithm~\ref{alg:beta-base}.

\begin{figure*}[h!]
	\center
	\includegraphics[width=1\textwidth]{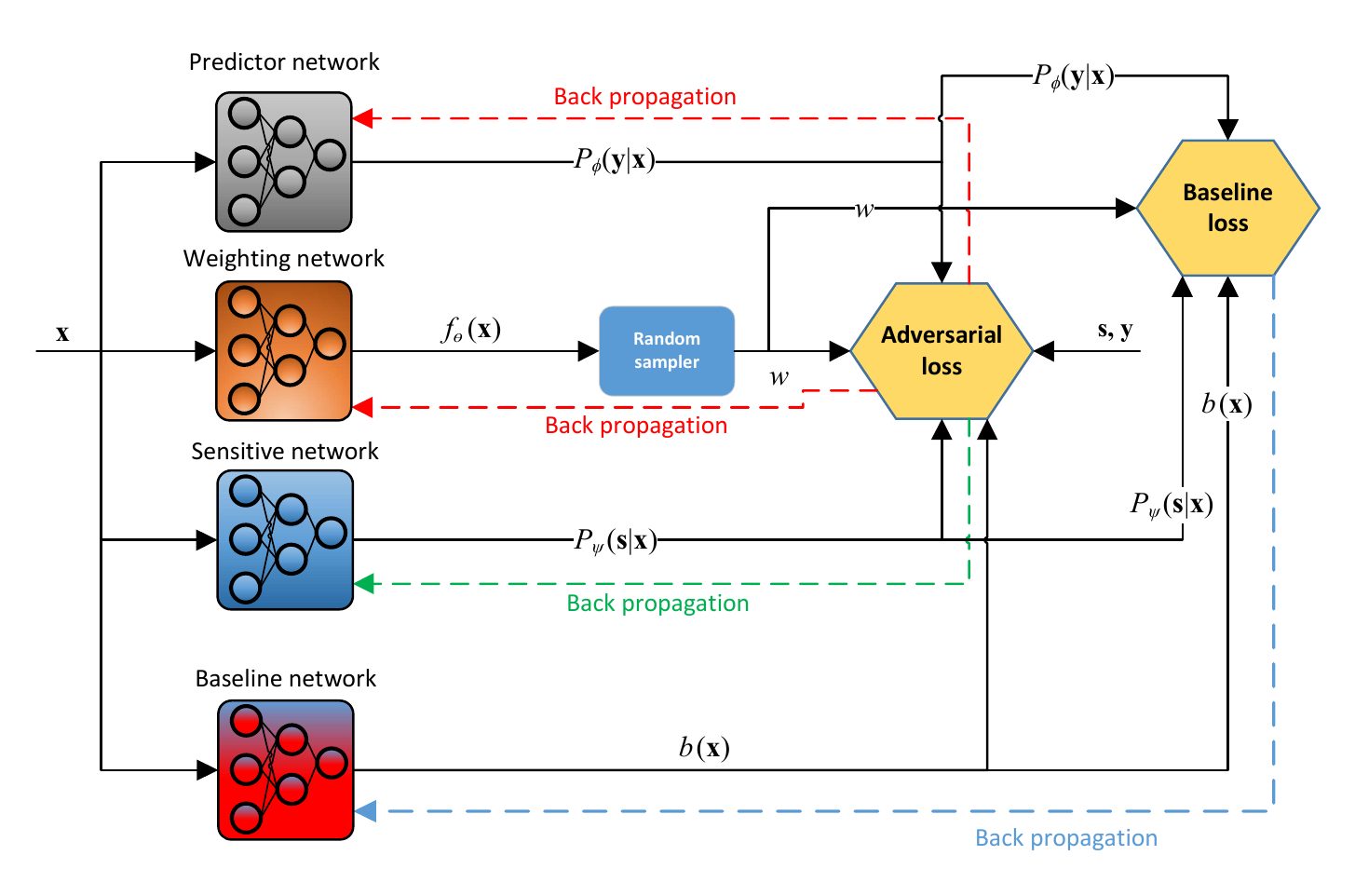}
	\captionsetup{justification=centering}
	\caption{Graphical representations of FAIR-betaSF model with baseline function}
	\label{fig:Fig2}
\end{figure*}

\begin{algorithm}[h!]
	\caption{FAIR-betaSF with baseline}
	\label{alg:beta-base}
	\begin{algorithmic}
		\State {\bfseries Input:} learning rates $\gamma_\theta, \gamma_\phi, \gamma_\psi$, $\gamma_b$ dataset $D$,  hyperparameter $\alpha$, number of iterations $M$
		\State {\bfseries Output:} parameters $\theta,\phi,\psi$, $\mu$
		\vspace{2mm}
		\State Initialize $\theta$, $\phi$, $\psi$, $\mu$
		\For{i = 1 to M}
		\State Sample a mini-batch $B\subseteq D$
		\State $\alpha_\mathbf{x},\beta_\mathbf{x}\leftarrow f_\theta(\mathbf{x})$ for each $\mathbf{x}\in B$
		\State Sample $w_\mathbf{x}\sim \beta(\alpha,\beta)$ for each $\mathbf{x}\in B$
		\State $d_{\theta} \leftarrow \gamma_\theta\frac{1}{|B|} \sum_{(\mathbf{x},\mathbf{y},\mathbf{s})\in B}\left[w_\mathbf{x}\nabla_{\theta}\log P_\theta(w_\mathbf{x}|\mathbf{x})\cdot\right.$
		
		\hspace{1.5cm}$\left.(\alpha\log P_\psi(\mathbf{s}|\mathbf{x})-\log P_\phi(\mathbf{y}|\mathbf{x})-b_\mu(\textbf{x}))\right]$
		\State $d_{\phi} \leftarrow \gamma_\phi\nabla_{\phi}{\cal L}^{\cal P}_\alpha(\theta,\phi,\psi,B)$
		\State $d_{\psi} \leftarrow -\gamma_\psi\nabla_{\psi}{\cal L}^{\cal P}_\alpha(\theta,\phi,\psi,B)$
		\State $d_{\mu} \leftarrow -\gamma_\mu\frac{1}{|B|} \sum_{(\mathbf{x},\mathbf{y},\mathbf{s})\in B}\left[w_\mathbf{x}\nabla_\mu b_\mu(\mathbf{x})\cdot\right.$
		
		\hspace{1.5cm}$\left.(\alpha\log P_\psi(\mathbf{s}|\mathbf{x})-\log P_\phi(\mathbf{y}|\mathbf{x}) - b_\mu(\mathbf{x}))\right]$
		\State $(\theta,\phi,\psi,\mu) \leftarrow (\theta,\phi,\psi,\mu) - (d_{\theta},d_{\phi}, d_{\psi},  d_{\mu})$
		\EndFor
	\end{algorithmic}
\end{algorithm}

\subsection{Analysis of model properties}

In order to analyze properties of all our models in a uniform manner, we discuss instance weights as real values in the interval $[0,1]$ and we emphasize dependence of the weight on the instance as $w_\mathbf{x}$ without explicating specifics of the dependence. Vector of all such weights is denoted $\mathbf{w}$ and it is denoted $\mathbf{w}^*$ if it is a part of the optimal solution of the corresponding adversarial problem.
In practice, expectations are approximated by sample means (or sums since outmost constant factors are irrelevant in optimization), and losses are regularized. Therefore we consider a regularized loss ${\cal L}_{\alpha}(\mathbf{w}, \phi, \psi)$:
\begin{equation}
\label{Eq:Loss-scalar-new}
\begin{split}
\sum_{(\mathbf{x},\mathbf{y},\mathbf{s})\in D} w_\mathbf{x} [\alpha\log P_{\psi}(\mathbf{s}|\mathbf{x}) -  \log P_{\phi}(\mathbf{y}|\mathbf{x})]\\
\text{s.t. }\|\theta\|^2_2+\|\phi\|^2_2+\|\psi\|^2_2\leq \lambda
\end{split}
\end{equation}
where dependence of $w_\mathbf{x}$ on $\theta$ is not made explicit, but we are aware that it exists.
To shorten the proofs, we formulate regularization in a constraint based manner \cite{tibshirani1996regression}, although it is more often formulated and implemented in a mathematically equivalent penalty based manner (note that the meaning of regularization parameter is reversed -- in penalty based formulation case $\lambda=0$ corresponds to an infinite value of $\lambda$ in constraint based formulation).

Now we analyze how our method behaves with respect to the hyperparameter $\alpha$. First, we aim to understand how it controls the trade-off between fairness and the quality of prediction of the target variable. This aspect is important for the practical application of the method. In a nutshell, extreme case $\alpha=0$ represents extreme emphasis on fairness and $\alpha\rightarrow\infty$ represents extreme emphasis on quality of prediction and disregard for fairness. Please note that a superficial glance at the adversarial problem would suggest vice versa, but we stress that it is not the case. The role of hyperparameter $\alpha$ in FAIR model is the opposite to its role in FAD model. Second, we aim to understand how the hyperparameter $\alpha$ affects the optimal weights assigned to the instances. It turns out that under some (reasonable) conditions the optimal weights will tend to $0$ and $1$ and that the value of $\alpha$ controls the proportion of the two limiting values. Further discussion is provided after the theoretical results.

\begin{lemma}
	If $\lambda$ is finite, there exist strictly negative constants
	$c_\phi$, $c'_\phi$, $c_\psi$, and $c'_\psi$ such that it holds
	$c_\phi\leq \log P_\phi(\mathbf{y}|\mathbf{x})\leq c'_\phi$ and
	$c_\psi\leq \log P_\phi(\mathbf{s}|\mathbf{x})\leq c'_\psi$ for
	any $\mathbf{x}$, $\mathbf{y}$, and $\mathbf{s}$, and any
	$\phi$ and $\psi$ which satisfy regularization condition \ref{Eq:Loss-scalar-new}.
	\label{pp:bounded}
\end{lemma}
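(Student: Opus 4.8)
The plan is to exploit two structural facts about the setup: that the constraint-based regularization (condition~\ref{Eq:Loss-scalar-new}) confines the parameters to a compact set, and that the predictor network $g_\phi$ and the sensitive network $h_\psi$ produce their predictive probabilities through a softmax (or sigmoid) read-out layer, whose outputs lie strictly inside the open interval $(0,1)$ for any finite pre-activation. Compactness of the parameter domain together with continuity of the networks and the extreme value theorem will then deliver the strictly negative bounds.

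First I would observe that the feasible set for $\phi$ is compact. The joint constraint $\|\theta\|^2_2+\|\phi\|^2_2+\|\psi\|^2_2\leq\lambda$ immediately gives $\|\phi\|^2_2\leq\lambda$, so $\phi$ ranges over the closed Euclidean ball of radius $\sqrt{\lambda}$, which is compact precisely because $\lambda$ is finite; the same holds for $\psi$. I would also note that the loss in~\ref{Eq:Loss-scalar-new} is a sum over the finite dataset $D$, so the triples $(\mathbf{x},\mathbf{y},\mathbf{s})$ range over a finite set. Hence it suffices to bound $\log P_\phi(\mathbf{y}|\mathbf{x})$, for each fixed instance, uniformly over the compact $\phi$-ball, and then take a finite minimum and maximum over the instances.

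Next, for a fixed $(\mathbf{x},\mathbf{y})$, I would consider the map $\phi \mapsto P_\phi(\mathbf{y}|\mathbf{x})$. A feed-forward network with continuous activations composed with a softmax output is a continuous function of its weights $\phi$, and the softmax guarantees $P_\phi(\mathbf{y}|\mathbf{x})\in(0,1)$ strictly: it is never $0$, so $\log P_\phi(\mathbf{y}|\mathbf{x})$ stays finite, and never $1$, so $\log P_\phi(\mathbf{y}|\mathbf{x})<0$. Thus $\phi\mapsto \log P_\phi(\mathbf{y}|\mathbf{x})$ is a continuous, everywhere-negative function on the compact ball, and by the extreme value theorem it attains a minimum and a maximum there, both strictly negative. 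Taking the smallest of the attained minima and the largest of the attained maxima over the finitely many instances yields constants with $c_\phi\leq \log P_\phi(\mathbf{y}|\mathbf{x})\leq c'_\phi<0$. Repeating the identical argument for the sensitive network $h_\psi$ with output $P_\psi(\mathbf{s}|\mathbf{x})$ produces $c_\psi$ and $c'_\psi$.

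The only delicate point, which I would state explicitly, is the strict separation from both endpoints: $\log P$ must remain finite (bounded below) and remain negative (bounded above by a quantity strictly less than $0$). The finite lower bound is exactly what compactness buys — a continuous, strictly positive function on a compact set has a strictly positive minimum, so its log cannot diverge to $-\infty$ — and the strict upper bound $c'_\phi<0$ follows because the softmax output is pointwise below $1$ and the attained maximum on a compact set inherits that strict inequality. Both conclusions hinge on $\lambda<\infty$: were the $\phi$-domain unbounded, one could drive a logit to $+\infty$, pushing some $P_\phi\to 1$ and the corresponding log-likelihood to $0$, which is exactly why the finiteness hypothesis on $\lambda$ is indispensable.
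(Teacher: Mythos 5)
Your proof is correct and follows essentially the same route as the paper's: compactness of the parameter ball (from finite $\lambda$), continuity of the networks in their parameters, and the extreme value theorem, with strict negativity of the log-likelihood coming from the output nonlinearity mapping into $(0,1)$. The only difference is cosmetic --- the paper argues via the pre-sigmoid networks $\bar{g}_\phi,\bar{h}_\psi$ whose bounded range the log-sigmoid maps into $(-\infty,0)$, while you work directly with the output probabilities and additionally make explicit (slightly more carefully than the paper) that the finiteness of the dataset $D$ lets you take a final min/max over instances.
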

\begin{proof}
	Denote ${\cal B}$ the ball defined by $\|\theta\|^2_2+\|\phi\|^2_2+\|\psi\|^2_2\leq \lambda$, representing the set of feasible solutions of the optimization problem.
	Denote $\bar{g}_\phi(\mathbf{x})$ the network $g_\phi(\mathbf{x})$ modelling $\mathbf{y}$ with sigmoid function at the output removed and $\bar{h}_\psi(\mathbf{x})$ the network $h_\psi(\mathbf{x})$ modelling $\mathbf{s}$ with sigmoid at the output removed. Since ${\cal B}$ is a compact set and $\bar{g}_\phi(\mathbf{x})$ and $\bar{h}_\psi(\mathbf{x})$ are continuous functions, they both attain their finite minimal and maximal values within ${\cal B}$. Since $\log P_\psi(\mathbf{s}|\mathbf{x})$ and $\log P_\phi(\mathbf{y}|\mathbf{x})$ are continuous functions of  $\bar{h}_\psi(\mathbf{x})$ and $\bar{g}_\phi(\mathbf{x})$, respectively, which map the range of $\bar{h}_\psi$ and $\bar{g}_\phi$ from $(-\infty,\infty)$ to $(-\infty, 0)$, functions $\log P_\psi(\mathbf{s}|\mathbf{x})$ and $\log P_\phi(\mathbf{y}|\mathbf{x})$ attain their strictly negative and finite minimal and maximal values within ${\cal B}$. Therefore, the required constants exist, by which the lemma is proven.
\end{proof}

\begin{theorem}
	If $\lambda$ is finite, for $\alpha=0$ it holds $\mathbf{w}^*=\mathbf{0}$.
	\label{pp:zero}
\end{theorem}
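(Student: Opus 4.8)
The plan is to substitute $\alpha=0$ into the regularized loss \ref{Eq:Loss-scalar-new} and observe that the adversarial problem collapses into the minimization of a manifestly non-negative quantity whose unique minimizer in $\mathbf{w}$ is the zero vector. Setting $\alpha=0$ annihilates the term $\alpha\log P_\psi(\mathbf{s}|\mathbf{x})$, so the loss reduces to
$$\mathcal{L}_0(\mathbf{w},\phi,\psi) = -\sum_{(\mathbf{x},\mathbf{y},\mathbf{s})\in D} w_\mathbf{x}\log P_\phi(\mathbf{y}|\mathbf{x}),$$
which no longer depends on $\psi$. Consequently the inner $\max_\psi$ is vacuous and the adversarial problem degenerates to the plain minimization $\min_{\mathbf{w},\phi}\mathcal{L}_0$.

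Next I would invoke Lemma \ref{pp:bounded}: for finite $\lambda$ and any $\phi$ satisfying the regularization constraint, we have $\log P_\phi(\mathbf{y}|\mathbf{x})\leq c'_\phi<0$, hence $-\log P_\phi(\mathbf{y}|\mathbf{x})\geq -c'_\phi>0$ for every instance. Since every weight lies in $[0,1]$, in particular $w_\mathbf{x}\geq 0$, each summand $-w_\mathbf{x}\log P_\phi(\mathbf{y}|\mathbf{x})$ is non-negative, so $\mathcal{L}_0\geq 0$. Moreover, because each coefficient $-\log P_\phi(\mathbf{y}|\mathbf{x})$ is \emph{strictly} positive, a summand can vanish only when $w_\mathbf{x}=0$; thus $\mathcal{L}_0=0$ if and only if $\mathbf{w}=\mathbf{0}$.

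Finally, since $\mathbf{w}=\mathbf{0}$ lies in the feasible set $[0,1]^N$ and attains the lower bound $0$ simultaneously for every feasible $\phi$, it is the global minimizer, and it is the unique minimizer in $\mathbf{w}$; hence $\mathbf{w}^*=\mathbf{0}$ irrespective of the accompanying $\phi^*$ and $\psi^*$, which establishes Theorem \ref{pp:zero}. I do not anticipate a genuine obstacle here: the argument is a short sign analysis. The only point that requires care is the appeal to the strict negativity of the upper bound $c'_\phi$ furnished by Lemma \ref{pp:bounded}, since it is precisely the strict positivity of the coefficients (rather than mere non-negativity) that forces the optimal weight vector to be exactly $\mathbf{0}$ instead of merely making the loss small.
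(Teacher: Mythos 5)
Your proof is correct and follows essentially the same route as the paper: set $\alpha=0$, reduce the adversarial problem to minimizing $-\sum_{(\mathbf{x},\mathbf{y},\mathbf{s})\in D} w_\mathbf{x}\log P_\phi(\mathbf{y}|\mathbf{x})$, and use the strict negativity of $\log P_\phi(\mathbf{y}|\mathbf{x})$ from Lemma \ref{pp:bounded} to conclude the minimum is $0$, attained at $\mathbf{w}=\mathbf{0}$. Your version is marginally more explicit than the paper's on two points --- the uniqueness of the minimizer and the vacuousness of $\max_\psi$ (for the latter, note the paper cites Lemma \ref{pp:bounded} to rule out an ill-defined $0\cdot(-\infty)$ in the $\psi$-term, a citation worth adding) --- but the substance is identical.
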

\begin{proof}
	By Lemma \ref{pp:bounded}, $P_{\psi}(\mathbf{s}|\mathbf{x})$ is bounded, so for $\alpha=0$ it holds:
	\begin{align*}
	(\mathbf{w}^*,\phi^*,\psi^*)&=\arg\min_{\mathbf{w},\phi}\max_{\psi}{\cal L}_\alpha(\mathbf{w},\phi,\psi)\\
	&=\arg\min_{\mathbf{w},\phi}- \sum_{(\mathbf{x},\mathbf{y},\mathbf{s})\in D} w_\mathbf{x} \cdot \log P_{\phi}(\mathbf{y}|\mathbf{x})
	\end{align*}
	By Lema \ref{pp:bounded}, $\log P_{\phi}(\mathbf{y}|\mathbf{x})$ is strictly negative, so $- \sum_{(\mathbf{x},\mathbf{y},\mathbf{s})\in D} w_\mathbf{x} \cdot \log P_{\phi}(\mathbf{y}|\mathbf{x})$ is zero or positive. Therefore its minimal value is $0$ for $\mathbf{w}=\mathbf{0}$ regardless of $\phi$.
	Therefore, it holds $\mathbf{w}^*=\mathbf{0}$.
\end{proof}

\begin{theorem}
	For each instance $(\mathbf{x},\mathbf{y},\mathbf{s})$, it holds
	$w_{\mathbf{x}}^*= 1$ or $w_{\mathbf{x}}^*= 0$ or $\alpha\log P_{\psi^*}(\mathbf{s}|\mathbf{x})=\log P_{\phi^*}(\mathbf{y}|\mathbf{x})$.
	\label{pp:anypoint}
\end{theorem}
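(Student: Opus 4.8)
The plan is to exploit the fact that the regularized loss
$\mathcal{L}_\alpha(\mathbf{w},\phi,\psi)=\sum_{(\mathbf{x},\mathbf{y},\mathbf{s})\in D} w_\mathbf{x}\,[\alpha\log P_\psi(\mathbf{s}|\mathbf{x})-\log P_\phi(\mathbf{y}|\mathbf{x})]$ is \emph{affine and separable} in the weight vector $\mathbf{w}\in[0,1]^N$: for fixed $\phi$ and $\psi$, each weight $w_\mathbf{x}$ contributes a single product $w_\mathbf{x}c_\mathbf{x}$, where the coefficient $c_\mathbf{x}=\alpha\log P_\psi(\mathbf{s}|\mathbf{x})-\log P_\phi(\mathbf{y}|\mathbf{x})$ is independent of $\mathbf{w}$. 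First I would invoke the saddle-point (Nash equilibrium) structure of the adversarial problem: at the optimal triple $(\mathbf{w}^*,\phi^*,\psi^*)$ the pair $(\mathbf{w}^*,\phi^*)$ minimizes $\mathcal{L}_\alpha(\cdot,\cdot,\psi^*)$, so in particular, with $\phi^*$ and $\psi^*$ held fixed, $\mathbf{w}^*$ minimizes the map $\mathbf{w}\mapsto\mathcal{L}_\alpha(\mathbf{w},\phi^*,\psi^*)$ over the box $[0,1]^N$.

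Next, this objective equals $\sum_{(\mathbf{x},\mathbf{y},\mathbf{s})\in D} w_\mathbf{x}c_\mathbf{x}^*$ with $c_\mathbf{x}^*=\alpha\log P_{\psi^*}(\mathbf{s}|\mathbf{x})-\log P_{\phi^*}(\mathbf{y}|\mathbf{x})$ a fixed finite constant, finiteness being guaranteed by Lemma~\ref{pp:bounded}. The minimization therefore decouples across instances, and for each $\mathbf{x}$ the scalar map $w_\mathbf{x}\mapsto w_\mathbf{x}c_\mathbf{x}^*$ on $[0,1]$ is monotone, so its minimizer lies at an endpoint: at $w_\mathbf{x}=0$ when $c_\mathbf{x}^*>0$ and at $w_\mathbf{x}=1$ when $c_\mathbf{x}^*<0$. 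The only situation in which the optimal weight is not pinned to $0$ or $1$ is $c_\mathbf{x}^*=0$, i.e. $\alpha\log P_{\psi^*}(\mathbf{s}|\mathbf{x})=\log P_{\phi^*}(\mathbf{y}|\mathbf{x})$, where the term is constant and every $w_\mathbf{x}\in[0,1]$ is optimal. This trichotomy is exactly the assertion of the theorem.

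The one step that deserves care, and which I expect to be the main obstacle, is the first one: passing from the minimax optimum to the statement that $\mathbf{w}^*$ is a best response against the \emph{specific} adversary $\psi^*$. The bare inequality $\min_{\mathbf{w},\phi}\max_\psi\geq\max_\psi\min_{\mathbf{w},\phi}$ does not by itself deliver this, since $\max_\psi\mathcal{L}_\alpha(\mathbf{w},\phi^*,\psi)$ may be attained at a different $\psi$ once $\mathbf{w}\neq\mathbf{w}^*$. I would therefore lean on the saddle-point characterization already assumed for adversarial training throughout the paper: given that $(\mathbf{w}^*,\phi^*,\psi^*)$ is an equilibrium, the two-sided saddle inequality $\mathcal{L}_\alpha(\mathbf{w}^*,\phi^*,\psi)\le\mathcal{L}_\alpha(\mathbf{w}^*,\phi^*,\psi^*)\le\mathcal{L}_\alpha(\mathbf{w},\phi,\psi^*)$ supplies precisely the required best-response property for $\mathbf{w}^*$. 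Everything after that reduction is elementary linear optimization over a box.
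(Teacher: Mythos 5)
Your proposal is correct and takes essentially the same route as the paper: the paper's proof evaluates the partial derivative $\frac{\partial \mathcal{L}_\alpha}{\partial w_\mathbf{x}}(\mathbf{w}^*,\phi^*,\psi^*)=\alpha\log P_{\psi^*}(\mathbf{s}|\mathbf{x})-\log P_{\phi^*}(\mathbf{y}|\mathbf{x})$ --- exactly your coefficient $c^*_\mathbf{x}$ --- and argues by perturbation that a negative sign would let $w^*_\mathbf{x}<1$ be increased to decrease the loss (contradicting optimality, hence $w^*_\mathbf{x}=1$), symmetrically $w^*_\mathbf{x}=0$ for a positive sign, with the zero derivative giving the third alternative, which is precisely your endpoint analysis of an objective affine in each $w_\mathbf{x}$, phrased locally rather than as separable linear minimization over the box $[0,1]^N$. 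The one place you go beyond the paper is in making explicit the saddle-point/best-response reduction --- that $\mathbf{w}^*$ minimizes $\mathcal{L}_\alpha(\cdot,\phi^*,\psi^*)$ against the \emph{fixed} adversary $\psi^*$ --- which the paper's contradiction step uses tacitly; your remark that bare minimax optimality alone would not license this is accurate, so closing it via the equilibrium inequalities is a legitimate tightening rather than a different proof (and your appeal to Lemma~\ref{pp:bounded} for finiteness of $c^*_\mathbf{x}$ is harmless, though note the theorem as stated does not assume $\lambda$ finite, and finiteness of the log-likelihoods at the fixed feasible parameters already suffices).
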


\begin{proof}
	Consider a partial derivative in the optimal solution:
	$$\frac{\partial{\cal L}_\alpha}{\partial w_\mathbf{x}}(\mathbf{w}^*,\phi^*,\psi^*)=\alpha\log P_{\psi^*}(\mathbf{s}|\mathbf{x})-\log P_{\phi^*}(\mathbf{y}|\mathbf{x})$$
	If the derivative is negative, then there exists $d>0$ such that it holds
	$${\cal L}_\alpha(\mathbf{w}^*+d\mathbf{e}_{\mathbf{x}},\phi^*,\psi^*) < {\cal L}_\alpha(\mathbf{w}^*,\phi^*,\psi^*)$$
	where $\mathbf{e}_\mathbf{x}=(0,\ldots,1,\ldots,0)\in\mathbb{R}^{|D|}$ where $1$ is at the coordinate corresponding to $w_\mathbf{x}$.
	Therefore, if it holds $w_\mathbf{x}^*< 1$, $w_\mathbf{x}^*$ can be increased in order to decrease the loss and $(\mathbf{w}^*,\phi^*,\psi^*)$ is not an optimal solution, which is a contradiction. Therefore, it has to hold $w_\mathbf{x}^*=1$.
	If the derivative is positive, $w_\mathbf{x}^*=0$ is proven in an analogous manner.
	If the derivative is $0$, the theorem holds due to its third case.
\end{proof}

In the following propositions, we explicitly denote dependence of the optimal solution on $\alpha$.

\begin{lemma}
	If $\lambda$ is finite, for each instance $(\mathbf{x},\mathbf{y},\mathbf{s})\in D$ it holds
	$$\frac{\partial{\cal L}_\alpha}{\partial w_\mathbf{x}}(\mathbf{w}^*_\alpha,\phi^*_\alpha,\psi^*_\alpha)\rightarrow -\infty\hspace{5mm}\text{as}\hspace{5mm}\alpha\rightarrow\infty$$
	\label{pp:regularization}
\end{lemma}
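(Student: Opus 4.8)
The plan is to start from the explicit form of the partial derivative already computed in the proof of Theorem~\ref{pp:anypoint}. Since the regularized loss~\eqref{Eq:Loss-scalar-new} is linear in each weight $w_\mathbf{x}$, differentiating at the optimum gives
$$\frac{\partial{\cal L}_\alpha}{\partial w_\mathbf{x}}(\mathbf{w}^*_\alpha,\phi^*_\alpha,\psi^*_\alpha)=\alpha\log P_{\psi^*_\alpha}(\mathbf{s}|\mathbf{x})-\log P_{\phi^*_\alpha}(\mathbf{y}|\mathbf{x}),$$
so the entire question reduces to controlling this affine-in-$\alpha$ expression as $\alpha\to\infty$. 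The coefficient of $\alpha$ is $\log P_{\psi^*_\alpha}(\mathbf{s}|\mathbf{x})$ and the constant term is $-\log P_{\phi^*_\alpha}(\mathbf{y}|\mathbf{x})$; my goal is to show that the first dominates and drives the whole expression to $-\infty$.

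Next I would invoke Lemma~\ref{pp:bounded}. The crucial observation -- and the point I would stress -- is that the bounds it provides are uniform over all feasible parameters, i.e. over all $\phi,\psi$ satisfying the regularization constraint. For every $\alpha$, the optimal $\phi^*_\alpha,\psi^*_\alpha$ are by definition feasible, so the same constants $c_\psi,c'_\psi,c_\phi,c'_\phi$ apply regardless of $\alpha$. In particular $\log P_{\psi^*_\alpha}(\mathbf{s}|\mathbf{x})\leq c'_\psi<0$ and $\log P_{\phi^*_\alpha}(\mathbf{y}|\mathbf{x})\geq c_\phi$ hold simultaneously for all $\alpha$.

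Combining these, since $\alpha>0$ I can upper-bound
$$\frac{\partial{\cal L}_\alpha}{\partial w_\mathbf{x}}(\mathbf{w}^*_\alpha,\phi^*_\alpha,\psi^*_\alpha)\leq \alpha c'_\psi - c_\phi.$$
Because $c'_\psi$ is strictly negative and $c_\phi$ is a fixed finite constant, the right-hand side tends to $-\infty$ as $\alpha\to\infty$, which forces the partial derivative to $-\infty$ as well. The only step requiring care -- the ``obstacle'', such as it is -- is precisely the uniformity argument: one must not let the bounding constants drift with $\alpha$, which would happen if $\phi^*_\alpha,\psi^*_\alpha$ could escape the feasible ball. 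Finiteness of $\lambda$, and hence compactness of that ball, is exactly what rules this out, so no compactness argument beyond the one already packaged in Lemma~\ref{pp:bounded} is needed.
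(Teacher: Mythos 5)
Your proposal is correct and follows essentially the same route as the paper's proof: both reduce the claim to the affine-in-$\alpha$ form of the partial derivative at the optimum and then apply the uniform bounds $\log P_{\psi^*_\alpha}(\mathbf{s}|\mathbf{x})\leq c'_\psi<0$ and $\log P_{\phi^*_\alpha}(\mathbf{y}|\mathbf{x})\geq c_\phi$ from Lemma~\ref{pp:bounded}. Your explicit emphasis that the constants do not drift with $\alpha$ (because $\phi^*_\alpha,\psi^*_\alpha$ remain in the fixed feasible ball) is merely a spelled-out version of what the paper uses implicitly, not a different argument.
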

\begin{proof}
	Consider a partial derivative with respect to $w_\mathbf{x}$ in an optimum:
	$$\frac{\partial{\cal L}_\alpha}{\partial w_\mathbf{x}}(\mathbf{w}^*_\alpha,\phi^*_\alpha,\psi^*_\alpha)=\alpha\log P_{\psi^*_\alpha}(\mathbf{s}|\mathbf{x})-\log P_{\phi^*_\alpha}(\mathbf{y}|\mathbf{x})$$
	According to Lemma \ref{pp:bounded}, for any feasible $\psi$ and $\phi$ there exists constants $c'_\psi<0$ and $c_\phi$ such that it holds $\log P_{\psi}(\mathbf{s}|\mathbf{x})\leq c'_\psi$ and $\log P_{\phi}(\mathbf{y}|\mathbf{x})\geq c_\phi$. Therefore, the first term goes to $-\infty$ as $\alpha\rightarrow\infty$ and the second term is bounded, so the limit of the partial derivative is $-\infty$.
\end{proof}

\begin{theorem}
	If $\lambda$ is finite, for each instance $(\mathbf{x},\mathbf{y},\mathbf{s})\in D$, it holds $w_{\mathbf{x},\alpha}^*\rightarrow 1$ as $\alpha\rightarrow\infty$.
	\label{pp:infty}
\end{theorem}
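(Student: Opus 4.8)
The plan is to combine Lemma~\ref{pp:regularization} with the sign analysis of the partial derivative carried out in the proof of Theorem~\ref{pp:anypoint}. First I would fix an arbitrary instance $(\mathbf{x},\mathbf{y},\mathbf{s})\in D$ and consider the partial derivative of the loss with respect to the scalar $w_\mathbf{x}$ evaluated at the optimal solution, namely $\frac{\partial{\cal L}_\alpha}{\partial w_\mathbf{x}}(\mathbf{w}^*_\alpha,\phi^*_\alpha,\psi^*_\alpha)=\alpha\log P_{\psi^*_\alpha}(\mathbf{s}|\mathbf{x})-\log P_{\phi^*_\alpha}(\mathbf{y}|\mathbf{x})$. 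The key point is that this derivative has already been controlled: Lemma~\ref{pp:regularization} asserts precisely that it tends to $-\infty$ as $\alpha\rightarrow\infty$, and crucially that statement is made \emph{at the optimum}, so the $\alpha$-dependence of $\phi^*_\alpha$ and $\psi^*_\alpha$ is absorbed and I need not re-derive any bounds here.

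From this I would extract a threshold argument. Since the derivative tends to $-\infty$, there exists $\alpha_0$ (possibly depending on the instance) such that for every $\alpha>\alpha_0$ the partial derivative is strictly negative. Now I would reuse the boundary reasoning from the proof of Theorem~\ref{pp:anypoint}: because ${\cal L}_\alpha$ is linear in the coordinate $w_\mathbf{x}$ over the feasible interval $[0,1]$, a strictly negative derivative makes it strictly decreasing in $w_\mathbf{x}$, so any value $w_\mathbf{x}^*<1$ could be increased to lower the loss, contradicting optimality. Hence $w^*_{\mathbf{x},\alpha}=1$ for every $\alpha>\alpha_0$, and in particular $w^*_{\mathbf{x},\alpha}\rightarrow 1$ as $\alpha\rightarrow\infty$; note this is actually slightly stronger than the claimed convergence, since the weight is exactly $1$ for all large enough $\alpha$.

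The main obstacle is not computational but one of correctly wiring together the previous results and justifying the derivative-to-optimum implication. I would want to be careful that the conclusion of Lemma~\ref{pp:regularization} is the value of the derivative at the genuine optimum $(\mathbf{w}^*_\alpha,\phi^*_\alpha,\psi^*_\alpha)$, so that the perturbation argument ``increase $w_\mathbf{x}$ to decrease the loss'' is applied at a true minimizer rather than an arbitrary point; this is exactly the setting of Theorem~\ref{pp:anypoint}, whose negative-derivative case yields $w^*_\mathbf{x}=1$, so the step is legitimate. I would also remark that the threshold $\alpha_0$ may depend on the instance, but since the theorem is a per-instance (pointwise) statement no uniformity over $D$ is required, which keeps the argument clean.
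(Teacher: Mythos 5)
Your proposal is correct and takes essentially the same route as the paper's own proof: both use Lemma~\ref{pp:regularization} to conclude that the partial derivative $\frac{\partial{\cal L}_\alpha}{\partial w_\mathbf{x}}$ at the optimum is strictly negative for all $\alpha$ beyond some threshold $\alpha_0$, and then reuse the negative-derivative case of Theorem~\ref{pp:anypoint} to force $w^*_{\mathbf{x},\alpha}=1$ for every such $\alpha$, which yields the limit. Your remark that the weight is exactly $1$ for all sufficiently large $\alpha$ (eventual equality, hence convergence) is precisely the content of the paper's closing $\varepsilon$-argument, so nothing is missing.
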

\begin{proof}
	According to Lemma \ref{pp:regularization},
	the limit of the values of the partial derivative $\frac{\partial{\cal L}_\alpha}{\partial w_\mathbf{x}}(\mathbf{w}^*_\alpha,\phi^*_\alpha,\psi^*_\alpha)$ in optima as $\alpha\rightarrow\infty$ is negative. Then, by the definition of the limit, there exists $\alpha_0\in \mathbb{R}$ such that for all $\alpha>\alpha_0$ it holds
	$$\frac{\partial{\cal L}_\alpha}{\partial w_\mathbf{x}}(\mathbf{w}^*_\alpha,\phi^*_\alpha,\psi^*_\alpha)<0$$
	For each such $\alpha$, since derivative with respect to $w_\mathbf{x}$ is negative, by the same argument as in the proof of Theorem \ref{pp:anypoint}, it holds $w^*_{\mathbf{x},\alpha}=1$. Hence, we can conclude that for each $\varepsilon>0$, there exists $\alpha_0$ such that for all $\alpha>\alpha_0$ it holds $w^*_{\mathbf{x},\alpha}>1-\varepsilon$  (since $w^*_{\mathbf{x},\alpha}=1$). Therefore, by the definition of the limit, we conclude that it holds $w^*_{\mathbf{x},\alpha}\rightarrow 1$ as $\alpha\rightarrow\infty$.
\end{proof}

In case of infinite $\lambda$, overfitting might falsify our proof of Lemma \ref{pp:bounded} and in that case for some instance $\mathbf{x}$ it might hold $w^*_{\mathbf{x},\alpha}\rightarrow 0$ as $\alpha\rightarrow\infty$. However, this suggests an interesting diagnostic property -- if for ever larger values of $\alpha$ one obtains $w_\mathbf{x}=0$ for some $\mathbf{x}$, one has reasons to suspect overfitting. Also, finite capacity of the network might make regularization 
unnecessary in practice. However, theoretical analysis was easier under the assumption of explicit regularization.

Also note that the model of instance weights does not need allow values $0$ and $1$. Nevertheless, the provided theorems inform us that the gradients will push the weights towards these values. Still, our probabilistic approaches might provide additional regularization by giving nonzero probability to other weight values except the optimal ones.

Provided theorems explain the role of hyperparameter $\alpha$ in our model -- it is a threshold on the ratio of instance usefulness and instance fairness based on which the model decides if the instance should be discarded or used for learning.
If it holds
$$\frac{\log P_{\phi}(\mathbf{y}|\mathbf{x})}{\log P_{\psi}(\mathbf{s}|\mathbf{x})}<\alpha$$
intuitively, the instance is fair enough considering its usefulness. Namely, for the ratio to be low, its predictive usefulness should be high (reflected by small negative value of log likelihood in the numerator) and its unfairness should be low (reflected by the large negative value of log likelihood in the denominator). In the extreme case of $\alpha=0$ no instance is considered fair enough, since neither the log likelihood in the numerator can be exactly zero, nor the log likelihood in the denominator can be infinite. According to Theorem \ref{pp:zero}, in that case, all instances are discarded. In the other extreme, according to Theorem \ref{pp:infty}, as $\alpha$ tends to infinity,  fairness is disregarded and all instances are used for learning. For values of $\alpha$ in between some instances are disregarded and some are used.

\section{Experimental Setup}
\label{Sec:exp-evaluation}

\textbf{Datasets}. The proposed framework was tested on four datasets, three of which are commonly used benchmarks. Two datasets (German credit and Adult income) come from the UCI ML repository \cite{frank2011uci}. To our knowledge the Hospital readmission dataset was used in this paper for the first time in the context of fairness.

The first, the \textit{Adult income} dataset \cite{kohavi1996scaling} represents a binary classification task of predicting whether an income is greater than 50K dollars. The dataset contains 45,222 instances described by 14 features and including the sensitive attribute Gender. The atributes used in the dataset describes the individual's education level, age, gender, occupation, workclass, martial-status, relationship, capital loss and etc \cite{Dua:2019}. After applying dummy coding, total number of features was 93. Total numbers of instances used in training, validation and testing are 31,655, 6,783, and 6,784, respectively.

Second dataset we used is the \textit{Hospital readmission} dataset \cite{stiglic2015comprehensible}. It represents a binary classification task where label 1 means that patient is readmitted within 30 days. The dataset consists of 66,994 instances and 931 attributes, including sensitive attribute Gender. Total number of instances used in training, validation and testing are 46,895, 10,049 and 10,050, respectively.

The third dataset, named \textit{Hospital Expenditures}, comes from \cite{bellamy2019ai}. It represents a binary classification task of predicting whether a person would have high or low utilization of medical expenditures. The sensitive attribute is Race. Dataset contains 15,830 instances and 133 attributes, after dummy codding, total number of attributes used in this dataset was 138. For training, validation, and testing, we used 11,081, 2,374 and 2,375 instances respectively.

As a fourth dataset, we used \textit{German credit} dataset. \textit{German credit} dataset has 1,000 instances where the task is to classify bank account holders into classes good or bad. The total number of attributes used in the dataset, after applying dummy coding is 58, including sensitive attributes. Following the definition of fairness from \cite{kamiran2012decision} for German credit dataset, there are two sensitive attributes, one being Gender and other being Age ($\geq 25$ is considered as privileged class, and $< 25$ as unprivileged class).  Total numbers of instances used in training, validation and testing was 700, 150, and 150, respectively.

\textbf{Models}. \sloppy The results obtained by FAIR models are compared with seven related and state-of-the-art algorithms: FAD, its probabilistic variant (FAD-prob), reweighing preprocessing technique from~\cite{kamiran2012data} combined with the random forest classifier (Reweighing - RF) and with neural networks (Reweighing - NN), disparity impact remover~\cite{feldman2015certifying} combined with random forest (DI - RF) and neural networks (DI - NN) and prejudice remover~\cite{kamishima2012fairness} (PR). Architecture, number of epochs in early stopping procedure, and learning rates were empirically determined as to optimize the performance of each model, by varying design choices of the architectures described in the literature. Detailed specifications can be found in \ref{app:Architecture}. We did not use explicit regularization in our experiments since, in accordance with the remark after the proof of theorem \ref{pp:infty}, capacity of the models can also be controlled through the choice of architecture.

\textbf{Optimization}.
For optimization of all neural network based models we use Adam optimizer \cite{bock2019proof}.
During optimization of FAIR and FAD models, early stopping was used. In the early stopping procedure, the min and max objectives of adversarial training procedure on validation set were monitored. In case when there were no improvements in either of these two metrics for a given number of epoch (provided in \ref{app:Architecture}), the training procedure is stopped.

\textbf{Metrics}.
Classification performance of all presented classifiers is quantified by the area under the ROC curve (AUC), which is calculated for the target variable ($\mathbf{y}$) and the sensitive attribute ($\mathbf{s}$). Therefore, we present $AUC_y$ and $AUC_s$ for the target variable and the sensitive attribute, respectively. If subscript is omitted, then $AUC_y$ is presented. As fairness metrics we use ASD, AEOD, and AOD defined by Eqs. \ref{metric:asd}, \ref{metric:aeod}, and \ref{metric:aod}, respectively. 

\textbf{Evaluation procedure and presentation of results}. The evaluated models (both FAIR and the baselines) have hyperparameters which affect the trade-off between fairness and predictive performance of the classifiers. Note that such hyperparameters do not control model capacity. Therefore, we do not tune them to obtain maximal perfomance (like one might tune regularization hyperparameters). Instead, we vary them in order to illustrate model behaviour for different trade-offs. The hyperparameters $\alpha$ of FAIR and FAD models were varied in range $[0, 10^{-3}, 10^{-2}, 10^{-1}, 1, 10, 10^{2}, 10^{3}]$, whereas in the case of other models, hyperparameters were varied in range $[0, 10^{-3}, 10^{-2}, 10^{-1}, 1]$ (since $1$ is the maximal value for these methods). For each such value, evlauation metrics were computed. We call a set of models obtained from one model kind (FAD, FAIR, etc.) by varying the fairnes related hyperparamter, a {\em model family}. For instance, all FAD models trained for different values of $\alpha$ constitute a FAD model family.

Since the models are evaluated by two criteria (predictive performance of the target variable and fairness), one model can be better than the other according to one criterion and vice-versa. Since both critera are important, instead of privileging one of them we present our results in terms of Pareto fronts. For a set of trained models, Pareto front consists of models which are not dominated by any other models in terms of both predictive performance and fairness \cite{marler2004survey}. Models which are dominated by others in terms of both criteria are obviously irrelevant and should be discarded. Pareto front can be plotted in 2D in terms of metrics for the two criteria used and visually inspected. We are interested in two
kinds of Pareto fronts:
\begin{itemize}
\item The overall Pareto front which is a Pareto front of all trained models (union of all model familites). Models which yield more points in such Pareto front are better.
\item Per family Pareto fronts, which are Pareto fronts obtained separately for different model families. They provide a more detailed view of models.
\end{itemize}

Construction of the Pareto front includes model selection - models are compared according to their performance and some of them are selected. Since evaluation metrics should never be reported on the data on which the selection was performed, we take care to train all models on the training set, to perform selection of the models for the Pareto front on the validation set, and to evaluate selected models on the test set. All results reported in the following section are calculated on the test set.

\section{Results and Discussion}
\label{sec:ResultsDiscussion}

In this section we provide experimental results following the above provided setup. Further on, we provide the discussion of these results and the qualitative evaluation of the behaviour of our model.

\subsection{Results}
\label{sec:Results}

In this section we present results obtained using the experimental evaluation outlined above.

\begin{figure*}
	\center
	\includegraphics[angle=0, width=0.7\textwidth]{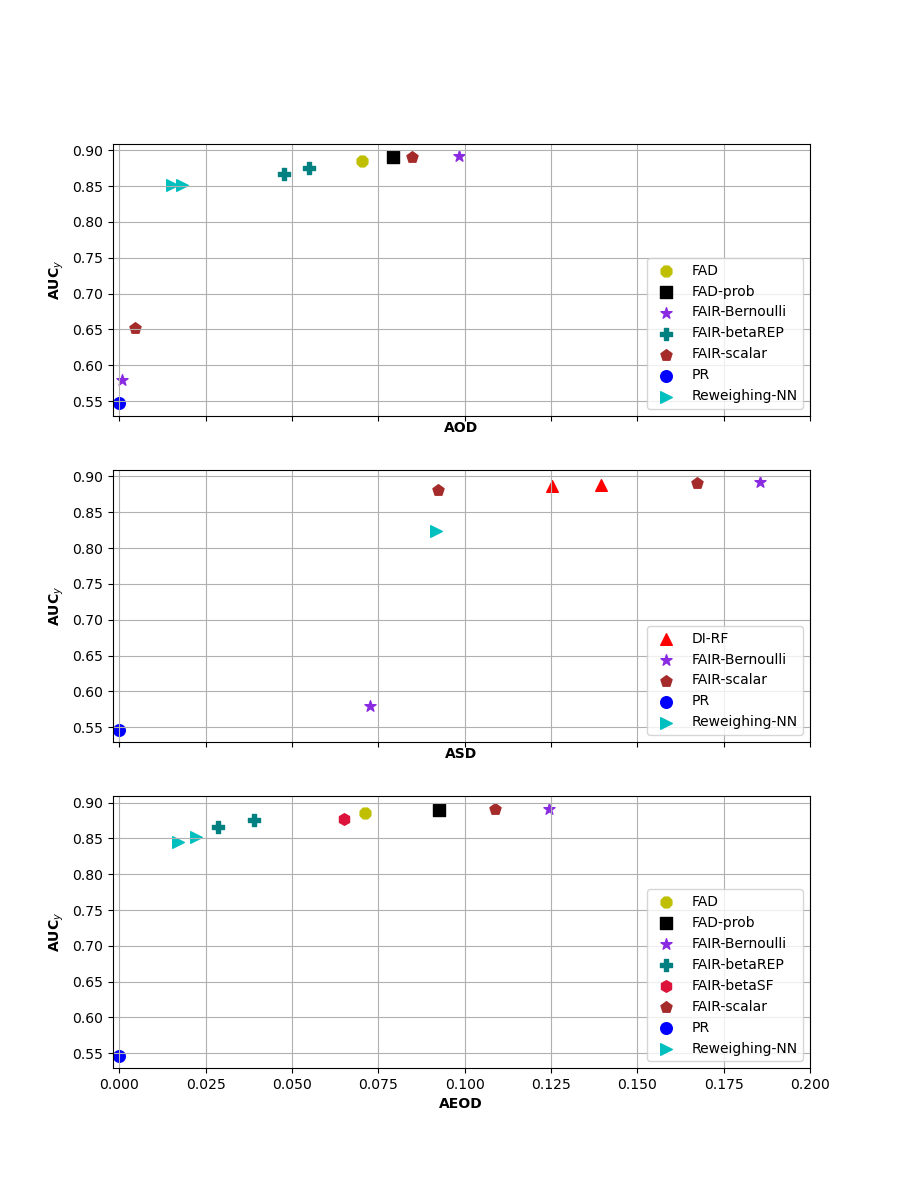}
	\captionsetup{justification=centering}
	\caption{Classification performance and fairness of models as measured by $\mathbf{AUC_y}$ and $\mathbf{AOD}$, $\mathbf{ASD}$ or $\mathbf{AEOD}$ on the \textit{Adult income} datasets}
	\label{fig:Adult}
	\vskip -0.2in
\end{figure*}

Firstly, models performances obtained on \textit{Adult income} dataset are illustrated in Fig.~\ref{fig:Adult} by three fairness metrics ($\mathbf{AOD}$, $\mathbf{ASD}$ or $\mathbf{AEOD}$) and classification performance ($\mathbf{AUC_y}$). The models with greater AUC score and lower (un)fairness metric (upper left corner of plots) are preferred. It can be observed that FAIR models dominates Pareto optimal solutions with respect to the all fairness metrics. In addition, Reweighing-NN and FAIR-betaREP models dominate the upper left corner of Pareto fronts for AOD and AEOD, whereas the FAIR-scalar dominates the upper left corner of Pareto front for ASD metric. Moreover, in table~\ref{tab:Adult} the Pareto optimal solutions obtained for all three fairness metrics and ($\mathbf{AUC_y}$) are presented. Similarly, it can be concluded that the number of FAIR models is larger compared to the other models and FAIR can therefore be considered better then other models.

\begin{table}
	\centering
	\caption{Pareto optimal solutions - \textit{Adult income} dataset}
	\begin{tabular}{|c|l|l|l|l|}
		\hline
		\textbf{model}          & \multicolumn{1}{c|}{\textbf{AUC}} & \multicolumn{1}{c|}{\textbf{AOD}} & \multicolumn{1}{c|}{\textbf{ASD}} & \multicolumn{1}{c|}{\textbf{AEOD}} \\ \hline
		\textbf{PR}             & 0.547                             & 0.000                             & 0.000                             & 0.000                              \\ \hline
		\textbf{DI-RF}          & 0.887                             & 0.095                             & 0.125                             & 0.155                              \\ \hline
		\textbf{DI-RF}          & 0.888                             & 0.103                             & 0.139                             & 0.163                              \\ \hline
		\textbf{Reweighing-NN}  & 0.852                             & 0.018                             & 0.152                             & 0.022                              \\ \hline
		\textbf{Reweighing-NN}  & 0.849                             & 0.020                             & 0.123                             & 0.005                              \\ \hline
		\textbf{Reweighing-NN}  & 0.824                             & 0.040                             & 0.092                             & 0.117                              \\ \hline
		\textbf{Reweighing-NN}  & 0.851                             & 0.015                             & 0.111                             & 0.064                              \\ \hline
		\textbf{Reweighing-RF}  & 0.888                             & 0.104                             & 0.167                             & 0.143                              \\ \hline
		\textbf{FAD}            & 0.886                             & 0.070                             & 0.172                             & 0.071                              \\ \hline
		\textbf{FAIR-scalar}    & 0.652                             & 0.005                             & 0.094                             & 0.052                              \\ \hline
		\textbf{FAIR-scalar}    & 0.881                             & 0.115                             & 0.092                             & 0.154                              \\ \hline
		\textbf{FAIR-scalar}    & 0.891                             & 0.085                             & 0.167                             & 0.109                              \\ \hline
		\textbf{FAIR-betaSF}    & 0.867                             & 0.061                             & 0.146                             & 0.063                              \\ \hline
		\textbf{FAIR-betaSF}    & 0.878                             & 0.074                             & 0.197                             & 0.065                              \\ \hline
		\textbf{FAIR-betaSF}    & 0.867                             & 0.055                             & 0.142                             & 0.068                              \\ \hline
		\textbf{FAIR-Bernoulli} & 0.580                             & 0.001                             & 0.073                             & 0.075                              \\ \hline
		\textbf{FAIR-Bernoulli} & 0.892                             & 0.098                             & 0.185                             & 0.124                              \\ \hline
		\textbf{FAIR-betaREP}   & 0.875                             & 0.055                             & 0.174                             & 0.039                              \\ \hline
		\textbf{FAIR-betaREP}   & 0.866                             & 0.048                             & 0.159                             & 0.029                              \\ \hline
		\textbf{FAD-prob}       & 0.890                             & 0.079                             & 0.172                             & 0.093                              \\ \hline
	\end{tabular}
	\label{tab:Adult}
\end{table}

Secondly, the results obtained on \textit{Hospital readmission} dataset are presented in Fig~\ref{fig:Readmission}. It can be noticed that only FAIR models exist on Pareto front and consequently all other models are dominated by them. It can be observed that in the case of AOD and ASD metrics FAIR-scalar is the closest to the upper left corner and can therefore be considered better than others. The letter can be also confirmed in the table~\ref{tab:Readmission} where only FAIR models exist on overall Pareto front.

\begin{figure}
	\center
	\includegraphics[angle=0, width=0.7\textwidth]{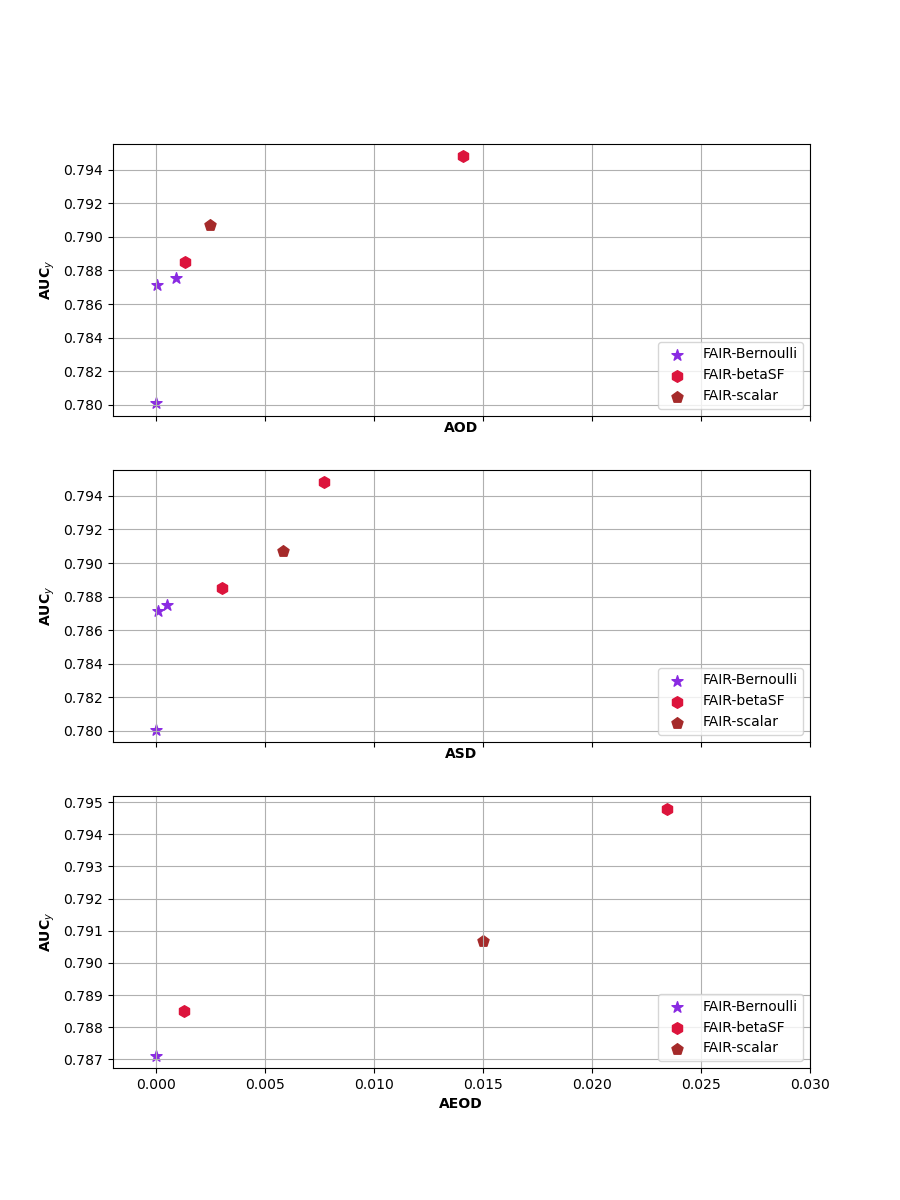}
	\captionsetup{justification=centering}
	\caption{Classification performance and fairness of models as measured by $\mathbf{AUC_y}$ and $\mathbf{AOD}$, $\mathbf{ASD}$ or $\mathbf{AEOD}$ on the \textit{Hospital readmission} dataset}
	\label{fig:Readmission}
	\vskip -0.2in
\end{figure}

\begin{table}
	\centering
	\caption{Pareto optimal solutions - \textit{Hospital readmission} dataset}
	\begin{tabular}{|c|l|l|l|l|}
		\hline
		\textbf{model}          & \multicolumn{1}{c|}{\textbf{AUC}} & \multicolumn{1}{c|}{\textbf{AOD}} & \multicolumn{1}{c|}{\textbf{ASD}} & \multicolumn{1}{c|}{\textbf{AEOD}} \\ \hline
		\textbf{FAIR-scalar}    & 0.791                             & 0.002                             & 0.006                             & 0.015                              \\ \hline
		\textbf{FAIR-betaSF}    & 0.795                             & 0.014                             & 0.008                             & 0.023                              \\ \hline
		\textbf{FAIR-betaSF}    & 0.789                             & 0.001                             & 0.003                             & 0.001                              \\ \hline
		\textbf{FAIR-Bernoulli} & 0.780                             & 0.000                             & 0.000                             & 0.000                              \\ \hline
		\textbf{FAIR-Bernoulli} & 0.787                             & 0.000                             & 0.000                             & 0.000                              \\ \hline
		\textbf{FAIR-Bernoulli} & 0.788                             & 0.001                             & 0.000                             & 0.002                              \\ \hline
	\end{tabular}
	\label{tab:Readmission}%
\end{table}

Thirdly, models performances obtained on \textit{Hospital expenditures} dataset are illustrated in Fig.~\ref{fig:Adult}. It can be observed that the FAIR models dominates Pareto front in all presented metrics. In the case of ASD metric PR model is the closest to the upper left corner of Pareto front, whereas in the case of AOD and AEOD metrics similarly can be concluded for DI-RF model. However in table~\ref{tab:MEPS-19} where overall Pareto front is presented the FAIR models still dominates Pareto front.

\begin{figure}
	\center
	\includegraphics[angle=0, width=0.7\textwidth]{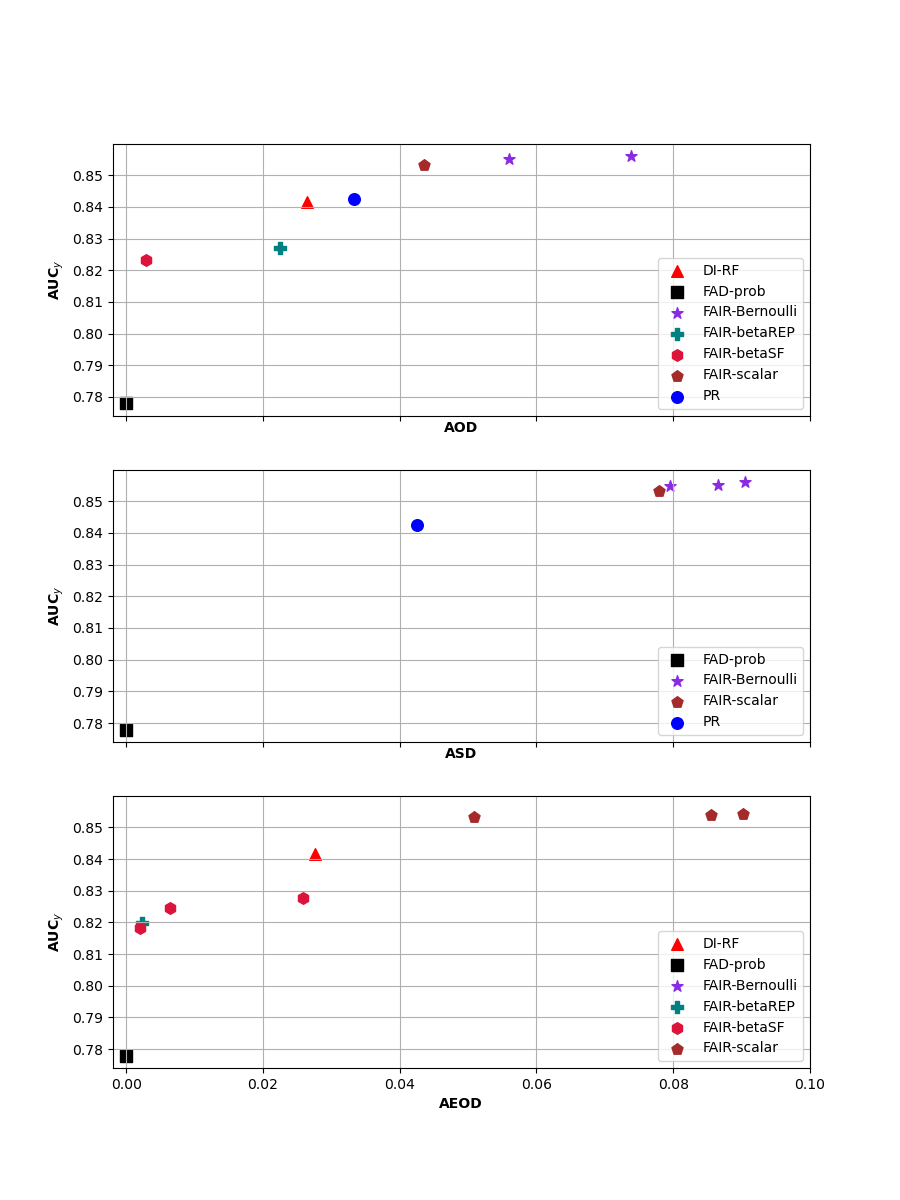}
	\captionsetup{justification=centering}
	\caption{Classification performance and fairness of models as measured by $\mathbf{AUC_y}$ and $\mathbf{AOD}$, $\mathbf{ASD}$ or $\mathbf{AEOD}$ \textit{Hospital expenditures} dataset}
	\label{fig:MEPS19}
	\vskip -0.2in
\end{figure}

\begin{table}
	\centering
	\caption{Pareto optimal solutions - \textit{Hospital expenditures} dataset}
	\begin{tabular}{|c|l|l|l|l|}
		\hline
		\textbf{model}          & \multicolumn{1}{c|}{\textbf{AUC}} & \multicolumn{1}{c|}{\textbf{AOD}} & \multicolumn{1}{c|}{\textbf{ASD}} & \multicolumn{1}{c|}{\textbf{AEOD}} \\ \hline
		\textbf{PR}             & 0.842                             & 0.033                             & 0.043                             & 0.059                              \\ \hline
		\textbf{DI-RF}          & 0.840                             & 0.032                             & 0.058                             & 0.043                              \\ \hline
		\textbf{DI-RF}          & 0.842                             & 0.027                             & 0.059                             & 0.028                              \\ \hline
		\textbf{Reweighing-RF}  & 0.842                             & 0.040                             & 0.065                             & 0.057                              \\ \hline
		\textbf{FAIR-scalar}    & 0.853                             & 0.044                             & 0.078                             & 0.051                              \\ \hline
		\textbf{FAIR-betaSF}    & 0.823                             & 0.003                             & 0.050                             & 0.013                              \\ \hline
		\textbf{FAIR-betaSF}    & 0.821                             & 0.013                             & 0.052                             & 0.007                              \\ \hline
		\textbf{FAIR-betaSF}    & 0.828                             & 0.028                             & 0.069                             & 0.026                              \\ \hline
		\textbf{FAIR-betaSF}    & 0.825                             & 0.015                             & 0.059                             & 0.006                              \\ \hline
		\textbf{FAIR-betaSF}    & 0.827                             & 0.024                             & 0.065                             & 0.021                              \\ \hline
		\textbf{FAIR-betaSF}    & 0.818                             & 0.012                             & 0.054                             & 0.002                              \\ \hline
		\textbf{FAIR-Bernoulli} & 0.855                             & 0.056                             & 0.087                             & 0.070                              \\ \hline
		\textbf{FAIR-Bernoulli} & 0.856                             & 0.074                             & 0.091                             & 0.110                              \\ \hline
		\textbf{FAIR-Bernoulli} & 0.855                             & 0.047                             & 0.080                             & 0.056                              \\ \hline
		\textbf{FAIR-betaREP}   & 0.820                             & 0.012                             & 0.053                             & 0.002                              \\ \hline
		\textbf{FAIR-betaREP}   & 0.820                             & 0.007                             & 0.052                             & 0.008                              \\ \hline
		\textbf{FAIR-betaREP}   & 0.827                             & 0.022                             & 0.055                             & 0.027                              \\ \hline
		\textbf{FAIR-betaREP}   & 0.817                             & 0.011                             & 0.056                             & 0.003                              \\ \hline
		\textbf{FAIR-betaREP}   & 0.816                             & 0.010                             & 0.057                             & 0.003                              \\ \hline
		\textbf{FAD-prob}       & 0.778                             & 0.000                             & 0.000                             & 0.000                              \\ \hline
	\end{tabular}
	\label{tab:MEPS-19}%
\end{table}

Eventually, model performances obtained of \textit{German credit} dataset for age and sex as sensitive attributes are presented in Figs.~\ref{fig:Ger-age-res} and~\ref{fig:Ger-sex-res}, respectively. Similarly as in previous datasets, overall Pareto fronts, for age and sex as sensitive attributes, are presented in Tables~\ref{tab:Ger-age} and~\ref{tab:Ger-sex}. It can be observed that in  Fig.~\ref{fig:Ger-sex-res} all models are equally represented, whereas in  Fig.~\ref{fig:Ger-age-res} FAIR models dominates the Pareto front in all cases. Similar conclusion can be made in the case of overall Pareto fronts that are presented in tables~\ref{tab:Ger-age} and~\ref{tab:Ger-sex}. In table~\ref{tab:Ger-age} it can be observed that all models are equally represented in Pareto fronts, whereas in table~\ref{tab:Ger-sex} the most dominant models are Reweighing-RF and FAD.

\begin{figure}
	\center
	\includegraphics[angle=0, width=0.7\textwidth]{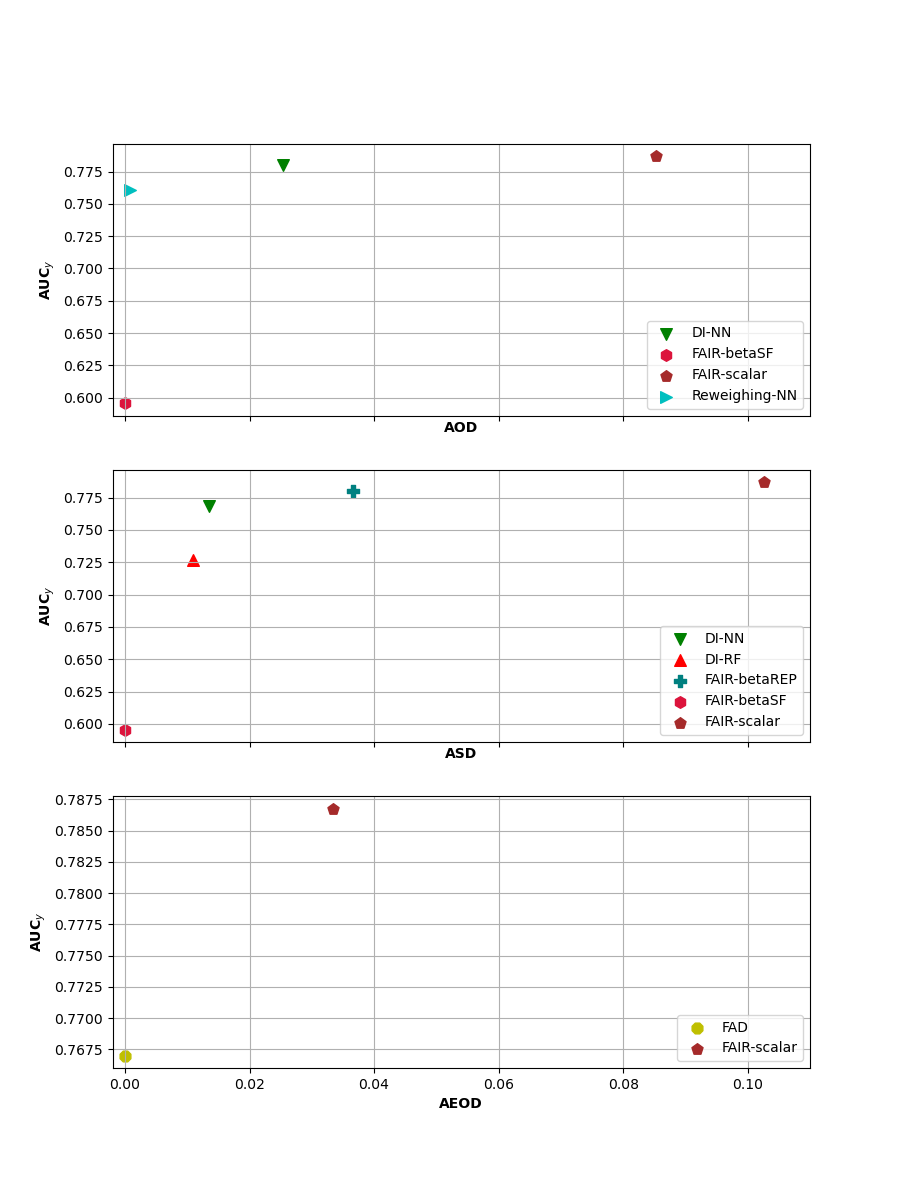}
	\captionsetup{justification=centering}
	\caption{Classification performance and fairness of models as measured by $\mathbf{AUC_y}$ and $\mathbf{AOD}$, $\mathbf{ASD}$ or $\mathbf{AEOD}$ \textit{German credit} (age) dataset}
	\label{fig:Ger-sex-res}
	\vskip -0.2in
\end{figure}

\begin{figure}
	\center
	\includegraphics[angle=0, width=0.7\textwidth]{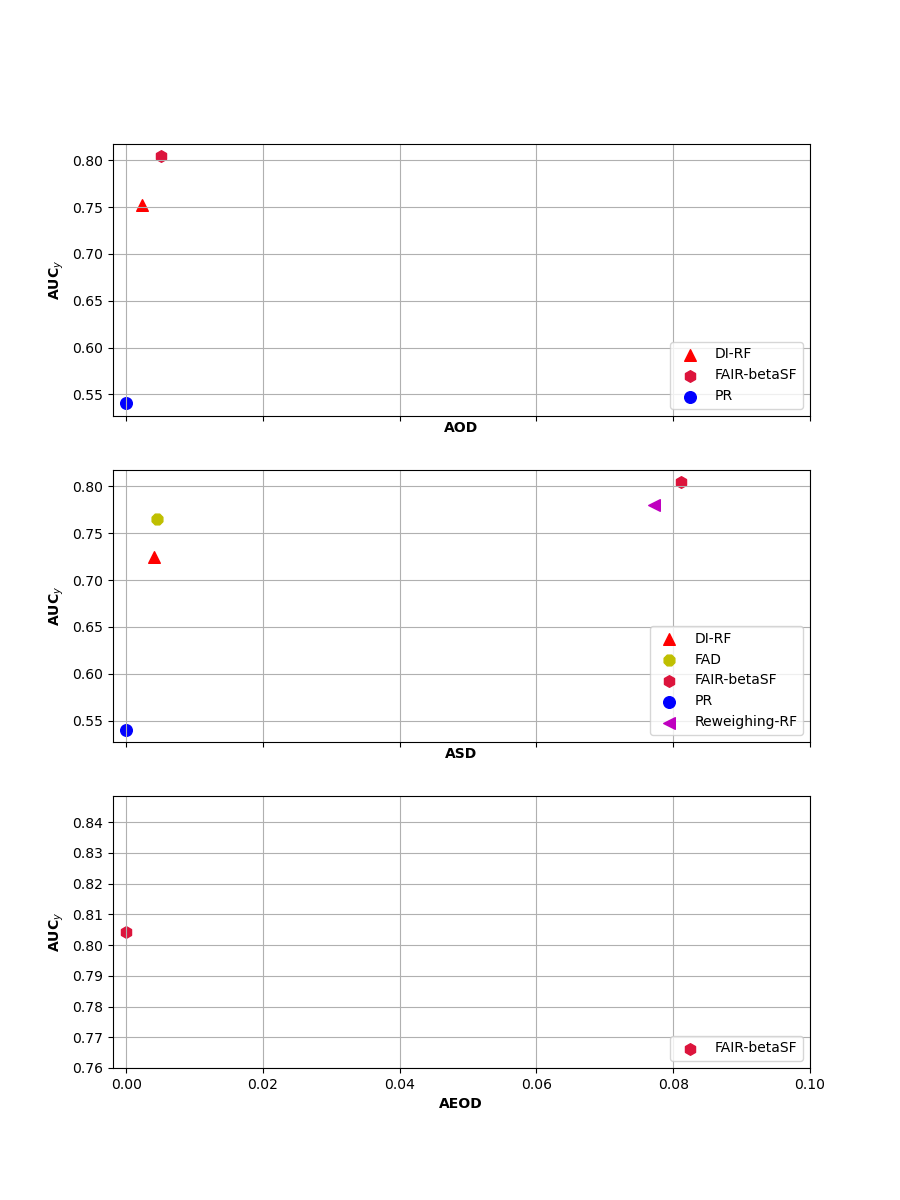}
	\captionsetup{justification=centering}
	\caption{Classification performance and fairness of models as measured by $\mathbf{AUC_y}$ and $\mathbf{AOD}$, $\mathbf{ASD}$ or $\mathbf{AEOD}$ \textit{German credit} (sex) dataset}
	\label{fig:Ger-age-res}
	\vskip -0.2in
\end{figure}

\begin{table}
	\centering
	\caption{Pareto optimal solutions - \textit{German credit} - age dataset}
	\begin{tabular}{|c|l|l|l|l|}
		\hline
		\textbf{model}          & \multicolumn{1}{c|}{\textbf{AUC}} & \multicolumn{1}{c|}{\textbf{AOD}} & \multicolumn{1}{c|}{\textbf{ASD}} & \multicolumn{1}{c|}{\textbf{AEOD}} \\ \hline
		\textbf{PR}             & 0.701                             & 0.114                             & 0.002                             & 0.367                              \\ \hline
		\textbf{DI-NN}          & 0.769                             & 0.056                             & 0.013                             & 0.067                              \\ \hline
		\textbf{DI-NN}          & 0.780                             & 0.025                             & 0.054                             & 0.100                              \\ \hline
		\textbf{DI-NN}          & 0.751                             & 0.042                             & 0.043                             & 0.133                              \\ \hline
		\textbf{DI-RF}          & 0.727                             & 0.017                             & 0.011                             & 0.033                              \\ \hline
		\textbf{DI-RF}          & 0.735                             & 0.055                             & 0.100                             & 0.033                              \\ \hline
		\textbf{Reweighing-NN}  & 0.710                             & 0.033                             & 0.005                             & 0.167                              \\ \hline
		\textbf{Reweighing-NN}  & 0.768                             & 0.041                             & 0.109                             & 0.000                              \\ \hline
		\textbf{Reweighing-NN}  & 0.761                             & 0.001                             & 0.087                             & 0.100                              \\ \hline
		\textbf{FAIR-scalar}    & 0.776                             & 0.159                             & 0.037                             & 0.600                              \\ \hline
		\textbf{FAIR-scalar}    & 0.787                             & 0.085                             & 0.103                             & 0.033                              \\ \hline
		\textbf{FAIR-betaSF}    & 0.596                             & 0.000                             & 0.000                             & 0.000                              \\ \hline
		\textbf{FAIR-Bernoulli} & 0.765                             & 0.226                             & 0.002                             & 0.567                              \\ \hline
		\textbf{FAIR-betaREP}   & 0.665                             & 0.003                             & 0.034                             & 0.067                              \\ \hline
		\textbf{FAIR-betaREP}   & 0.780                             & 0.198                             & 0.037                             & 0.400                              \\ \hline
		\textbf{FAD-prob}       & 0.752                             & 0.069                             & 0.070                             & 0.000                              \\ \hline
	\end{tabular}
	\label{tab:Ger-age}
\end{table}

\begin{table}
	\centering
	\caption{Pareto optimal solutions - \textit{German credit} - sex dataset}
	\begin{tabular}{|c|l|l|l|l|}
		\hline
		\textbf{model}         & \multicolumn{1}{c|}{\textbf{AUC}} & \multicolumn{1}{c|}{\textbf{AOD}} & \multicolumn{1}{c|}{\textbf{ASD}} & \multicolumn{1}{c|}{\textbf{AEOD}} \\ \hline
		\textbf{PR}            & 0.540                             & 0.000                             & 0.000                             & 0.000                              \\ \hline
		\textbf{DI-RF}         & 0.727                             & 0.017                             & 0.023                             & 0.033                              \\ \hline
		\textbf{DI-RF}         & 0.725                             & 0.018                             & 0.004                             & 0.017                              \\ \hline
		\textbf{Reweighing-NN} & 0.755                             & 0.002                             & 0.021                             & 0.117                              \\ \hline
		\textbf{Reweighing-RF} & 0.739                             & 0.025                             & 0.014                             & 0.050                              \\ \hline
		\textbf{Reweighing-RF} & 0.739                             & 0.041                             & 0.047                             & 0.033                              \\ \hline
		\textbf{Reweighing-RF} & 0.780                             & 0.018                             & 0.077                             & 0.083                              \\ \hline
		\textbf{Reweighing-RF} & 0.763                             & 0.029                             & 0.072                             & 0.067                              \\ \hline
		\textbf{FAD}           & 0.672                             & 0.013                             & 0.042                             & 0.067                              \\ \hline
		\textbf{FAD}           & 0.766                             & 0.015                             & 0.005                             & 0.167                              \\ \hline
		\textbf{FAD}           & 0.547                             & 0.015                             & 0.019                             & 0.283                              \\ \hline
		\textbf{FAD}           & 0.576                             & 0.014                             & 0.022                             & 0.067                              \\ \hline
		\textbf{FAIR-betaSF}   & 0.804                             & 0.005                             & 0.081                             & 0.000                              \\ \hline
	\end{tabular}
	\label{tab:Ger-sex}%
\end{table}

Consequently, in this section all presented results were evaluated on test sets and only Pareto efficient solutions were presented. Additional results can be observer in~\ref{app:Additional results}.

\subsection{Discussion}

Model behaviour of FAIR model with respect to change of hyperparameter $\alpha$ is shown in Fig.~\ref{fig:FigResultsAlpha} on \textit{Geman credit} dataset. It can be observed that as $\alpha$ decreases, instances which are unfair (but potentially useful for prediction of target variable) are being discarded, so AUC metrics for both the target variable and sensitive attribute decrease. This is experimental verification of theoretical model properties presented in section~\ref{Sec:FAIR}.

\begin{figure}[h]
	\center
	\includegraphics[angle=0, width=0.8\textwidth]{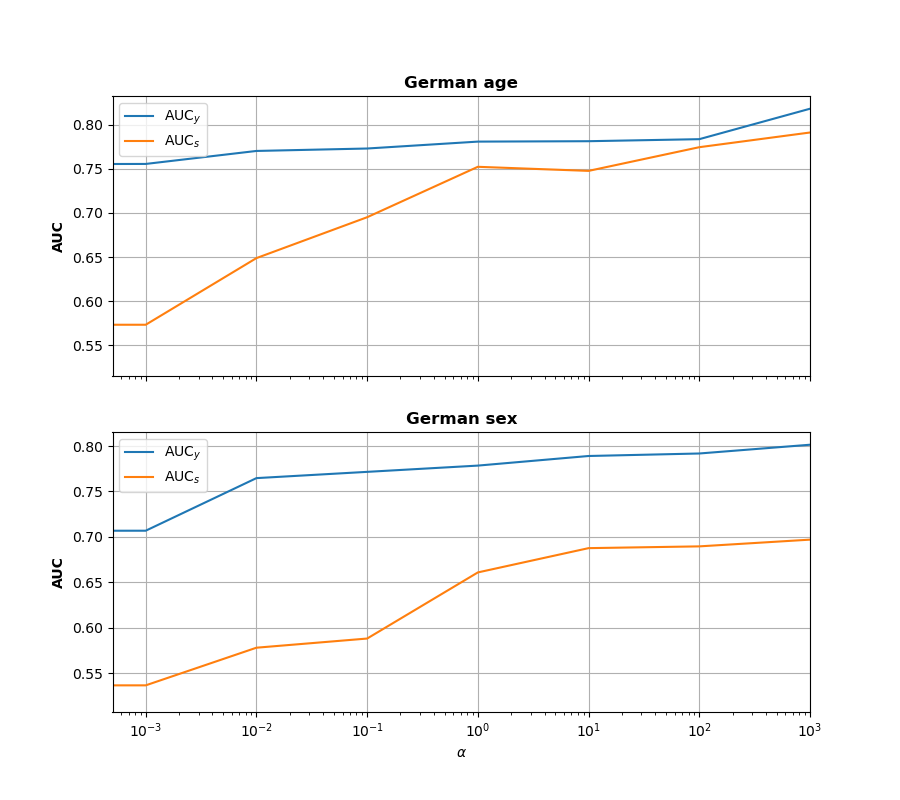}
	\captionsetup{justification=centering}
	\caption{$\mathbf{AUC_y}$ and $\mathbf{AUC_s}$ as functions of the fariness hyperparameter $\alpha$ measured on the  German credit - sex and Readmission datasets ($\mathbf{AUC_y}$ is preferred larger, and $\mathbf{AUC_s}$ smaller)}
	\label{fig:FigResultsAlpha}
\end{figure}

Furthermore, we increased the hyperparameter $\alpha$ in FAIR-scalar model from 0 to the first value where one of the instance in training dataset has weight that tends to 1. Based on theoretical formulation of model properties this is the most "fair" instance in dataset. Moreover, we kept to increase parameter $\alpha$ until the first two instances with weights that tends to 1 in opposite sex and label categories occurred. In table~\ref{tab:Explainability} the attributes of previously mentioned instances are presented.

Firstly, it could be observed that the most "fair" instance has good credit score mainly based on facts that he is employed as manager, does not have other debtors and credits taken, posses life insurance and house, is not a foreign worker, has small amount of money on checking account. Similar, attributes can be seen in the case of the first "fair" instance with good credit score that is female. She is not a foreign worker, employed as manager for 7 or more years, paid back duly existing credits and took credit for buying new car. She has small amount of money on checking account and does not have other debtors. Unlike this two instances, the first instance with bad credit score is unemployed man, that has other debtors, is foreign worker, does have house and car. It is obviously that unemployment and other debts has the most influential impact on labelling this instance as bad.

It can be concluded that all presented instances have reasonable explanations why they are labelled with bad or good credit score. Furthermore, it can be seen that sex does not have any kind of cause on final decision so FAIR-scalar successfully labelled them as "fair".

\begin{table}
	\centering
	\caption{German credit dataset instances with non zero weights}
	\resizebox{\textwidth}{!}{\begin{tabular}{|c|c|c|c|}
		\hline
		\textbf{Credit   duration}                               & 48                         & 36                                                                                   & 36                                                                                   \\ \hline
		\textbf{Credit amount}                                   & 18424                      & 14318                                                                                & 15857                                                                                \\ \hline
		\textbf{Investment as income percentage}                 & 1                          & 4                                                                                    & 2                                                                                    \\ \hline
		\textbf{Residence since}                                 & 2                          & 2                                                                                    & 3                                                                                    \\ \hline
		\textbf{No.of credits taken}                             & 1                          & 1                                                                                    & 1                                                                                    \\ \hline
		\textbf{No. of people liable to provide maintenance for} & 1                          & 1                                                                                    & 1                                                                                    \\ \hline
		\textbf{Status of checking account}                      & \textless{}200 DM          & \textless{}200 DM                                                                    & \textless{}0DM                                                                       \\ \hline
		\textbf{Credit history}                                  & no   credits taken         & existing credits paid back duly till now &  existing credits paid back duly till now \\ \hline
		\textbf{Purpose}                                         & other                      & car (new)                                                                            & other                                                                                \\ \hline
		\textbf{Savings}                                         & \textless{}100 DM          & \textless{}100 DM                                                                    & \textless{}100 DM                                                                    \\ \hline
		\textbf{Employment}                                      & 1\textless{}4 years        & \textgreater{}=7 years                                                               & unemployed                                                                           \\ \hline
		\textbf{Other debtors}                                   & none                       & none                                                                                 & co-applicant                                                                         \\ \hline
		\textbf{Properties}                                      & Life insurance             & unknown                                                                              & car or other                                                                         \\ \hline
		\textbf{Installment plans}                               & bank                       & none                                                                                 & none                                                                                 \\ \hline
		\textbf{Housing}                                         & own                        & for free                                                                             & own                                                                                  \\ \hline
		\textbf{Skill level}                                     & management                 & management                                                                           & self-employed                                                                        \\ \hline
		\textbf{Telphone}                                        & yes,   under customer name & yes,   under customer name                                                           & none                                                                                 \\ \hline
		\textbf{Foreign worker}                                  & no                         & no                                                                                   & yes                                                                                  \\ \hline
		\textbf{Sensitive attribute - Sex}                                             & male                       & female                                                                               & male                                                                                 \\ \hline
		\textbf{Credit score -Label}                                           & Good                       & Good                                                                                 & Bad                                                                                  \\ \hline
	\end{tabular}}
	\label{tab:Explainability}
\end{table}

\section{Conclusions}
\label{Sec:Conclusion}
We introduced a Fair Adversarial Instance Re-weighting (FAIR) discriminative method, which uses adversarial training to learn instance weights to ensure fairness. We proposed four different variants of the method: a non probabilistic one and three models cast in fully probabilistic framework. In addition, we presented a possibility to introduce a baseline to reduce variance of gradient estimation for models based on score function. Theoretical analysis of FAIR model behaviour with respect to the change of the hyperparameter $\alpha$ is given. We proved that changing the value of the hyperparameter controls the trade-off between model fairness and predictive performance. In experimental evaluation on five real-world tasks we demonstrated that our models outperform previous state-of-the-art approaches with respect to fairness metrics and classification performance. Moreover, we showed experimental verification of presented results, and demonstrate that FAIR model is able to find "fair" instances for small values of the hyperparameter $\alpha$.

Further studies should address extending FAIR models to numerical and categorical values of sensitive attributes and adding additional loss constraints for individual fairness.

\section*{Acknowledgement}
This work was supported in part by the ONR/ONR Global under Grant N62909-19-1-2008. In addition, this research is partially supported by the Ministry of Science, Education and Technological Development of the Republic of Serbia grants OI174021, TR35004 and TR41008. The authors would like to express gratitude to company Saga New Frontier Group Belgrade, for supporting this research.

\bibliographystyle{elsarticle-num}
\bibliography{literatura}

\appendix

\section{Model architecture}
\label{app:Architecture}

In all experiments with Reweighing - RF and DI - RF 500 trees were used in the random forest algorithm. Architectures, learning rates and maximum number of epochs used in models with neural networks for all datasets are presented in Table~\ref{tab:A1},~\ref{tab:A2} and~\ref{tab:A3}.

\begin{table*}
	\centering
	\caption{Architectures of models used}
	\label{Table:tab1}
	\begin{adjustbox}{max width=\textwidth, totalheight = \textheight-0.1in}
	\begin{tabular}{|L|L|L|L|L|L|}
			\toprule
			\textbf{Model} & {\textbf{No. of units per layer $P_\theta(w|\mathbf{x})$ or $P_\theta(\mathbf{z}|\mathbf{x})$}} & {\textbf{No. of units per layer $P_\phi(\mathbf{y}|\mathbf{x})$}} & {\textbf{No. of cells per layer $P_\psi(\mathbf{y}|\mathbf{x})$}} & \textbf{Activation} & {\textbf{Early stopping epoch / learning rate}} \\
			
			\midrule
			\multicolumn{6}{|c|}{\textbf{Adult}} \\
			\midrule
			\textbf{DI - NN} & - & 62/41/27/1 & - & ReLU + Batch normalization  + sigmoid (last layer) & $10 / 10^{-3}$ \\
			\midrule
			\textbf{Reweighing - NN} & - & 62/41/27/1 & - & ReLU + Batch normalization  + sigmoid (last layer) & $10 / 10^{-3}$ \\
			\midrule
			\textbf{FAD} & 62/41/27 & 18/12/1 & 18/12/1 & ReLU + Batch normalization  + sigmoid (last layer) & $50 / 10^{-4}$ \\
			\midrule
			\textbf{FAD-prob} & {46/23/23/23} & 11/1 & 11/1 & ReLU   + sigmoid (last layer) & $50 / 10^{-4}$ \\
			\midrule
			\textbf{FAIR-scalar} & 62/41/27/1 & 62/41/1 & 62/1 & ReLU + Batch normalization  + sigmoid (last layer) & $50 / 10^{-4}$ \\
			\midrule
			\textbf{FAIR-betaSF} & 62/41/27/2 & 62/41/1 & 62/1 & ReLU + Batch normalization  + sigmoid or exp (last layer) & $50 / 10^{-4}$ \\
			\midrule
			\textbf{FAIR-betaREP} & 62/41/27/2 & 62/41 & 62/1 & ReLU + Batch normalization  + sigmoid or exp (last layer) & $50 / 10^{-5}$ \\
			\midrule
			\textbf{FAIR-Bernoulli} & 62/41/1 & 62/41/27/1 & 62/1 & ReLU + Batch normalization  + sigmoid  (last layer) & $500 / 10^{-4}$ \\
			\midrule
			\multicolumn{6}{|c|}{\textbf{Readmission}} \\
			\midrule
			\textbf{DI - NN} & - & 464/232/116/1 & - & ReLU + Batch normalization  + sigmoid (last layer) & $10 / 10^{-3}$ \\
			\midrule
			\textbf{Reweighing - NN} & - & 464/232/116/1 & - & ReLU + Batch normalization  + sigmoid (last layer) & $10 / 10^{-3}$ \\
			\midrule
			\textbf{FAD} & 464/232/116 & 58/1 & 58/1 & ReLU + Batch normalization  + sigmoid (last layer) & $40 / 10^{-4}$ \\
			\midrule
			\textbf{FAD-prob} & {464/232/232/232} & 116/58/1 & 116/58/1 & ReLU   + sigmoid (last layer) & $40 / 10^{-4}$ \\
			\midrule
			\textbf{FAIR-scalar} & 464/232/1 & 464/1 & 464/1 & ReLU + Batch normalization  + sigmoid (last layer) & $40 / 10^{-4}$ \\
			\midrule
			\textbf{FAIR-betaSF} & 464/232/2 & 464/1 & 464/1 & ReLU + Batch normalization  + sigmoid or exp (last layer) & $40 / 10^{-5}$ \\
			\midrule
			\textbf{FAIR-betaREP} & 464/232/2 & 464/1 & 464/1 & ReLU + Batch normalization  + sigmoid or exp (last layer) & $40 / 10^{-5}$ \\
			\midrule
			\textbf{FAIR-Bernoulli} & 464/1 & 464/232/1 & 464/1 & ReLU + Batch normalization  + sigmoid (last layer) & $40 / 10^{-4}$ \\
			\bottomrule
	\end{tabular}%
	\end{adjustbox}
	\label{tab:A1}%
\end{table*}%

\begin{table*}
	\centering
	\caption{Architectures of models used}
	\label{Table:tab1}
	\begin{adjustbox}{max width=\textwidth, totalheight = \textheight-0.1in}
		\begin{tabular}{|L|L|L|L|L|L|}
			\toprule
			\textbf{Model} & {\textbf{No. of units per layer $P_\theta(w|\mathbf{x})$ or $P_\theta(\mathbf{z}|\mathbf{x})$} } & {\textbf{No. of units per layer $P_\phi(\mathbf{y}|\mathbf{x})$}} & {\textbf{No. of cells per layer $P_\psi(\mathbf{y}|\mathbf{x})$}} & \textbf{Activation} & {\textbf{Early stopping epoch / learning rate}} \\
			
			\midrule
			\multicolumn{6}{|c|}{\textbf{Medical expenditures}} \\
			\midrule
			\textbf{DI - NN} & - & 91/60/1 & - & ReLU + Batch normalization  + sigmoid (last layer) & $10 / 10^{-3}$ \\
			\midrule
			\textbf{Reweighing - NN} & - & 91/60/1 & - & ReLU + Batch normalization  + sigmoid (last layer) & $10 / 10^{-3}$ \\
			\midrule
			\textbf{FAD} & 68/34/17 & 8/1 & 8/1 & ReLU + Batch normalization  + sigmoid (last layer) & $50 / 10^{-4}$ \\
			\midrule
			\textbf{FAD-prob} & {68/34/34/34} & 17/1 & 17/1 & ReLU   + sigmoid (last layer) & $50 / 10^{-4}$ \\
			\midrule
			\textbf{FAIR-scalar} & 68/34/1 & 68/1 & 68/1 & ReLU + Batch normalization  + sigmoid (last layer) & $50 / 10^{-4}$ \\
			\midrule
			\textbf{FAIR-betaSF} & 68/34/2 & 68/1 & 68/1 & ReLU + Batch normalization  + sigmoid or exp (last layer) & $50 / 10^{-5}$ \\
			\midrule
			\textbf{FAIR-betaREP} & 68/34/2 & 68/1 & 68/1 & ReLU + Batch normalization  + sigmoid (last layer) & $50 / 10^{-5}$ \\
			\midrule
			\textbf{FAIR-Bernoulli} & 68/1 & 68/34/1 & 68/1 & ReLU + Batch normalization  + sigmoid or exp (last layer) & $50 / 10^{-4}$ \\
			\midrule
			\multicolumn{6}{|c|}{\textbf{German credit - sex}} \\
			\midrule
			\textbf{DI - NN} & - & 37/24/1 & - & ReLU + Batch normalization  + sigmoid (last layer) & $10 / 10^{-3}$ \\
			\midrule
			\textbf{Reweighing - NN} & - & 37/24/1 & - & ReLU + Batch normalization  + sigmoid (last layer) & $10 / 10^{-3}$ \\
			\midrule
			\textbf{FAD} & 37/24/1 & 16/1 & 16/1 & ReLU + Batch normalization  + sigmoid (last layer) & $60 / 10^{-4}$ \\
			\midrule
			\textbf{FAD-prob} & {28/14/14/14} & 7/1 & 7/1 & ReLU   + sigmoid (last layer) & $50 / 10^{-5}$ \\
			\midrule
			\textbf{FAIR-scalar} & 37/1 & 1  & 1  & ReLU + Batch normalization  + sigmoid (last layer) & $50 / 10^{-4}$ \\
			\midrule
			\textbf{FAIR-betaSF} & 37/2 & 1  & 1  & ReLU + Batch normalization  + sigmoid or exp (last layer) & $60 / 10^{-5}$ \\
			\midrule
			\textbf{FAIR-betaREP} & 37/2 & 1  & 1  & ReLU + Batch normalization  + sigmoid or exp (last layer) & $60 / 10^{-5}$ \\
			\midrule
			\textbf{FAIR-Bernoulli} & 37/1 & 1  & 1  & ReLU + Batch normalization  + sigmoid (last layer) & $60 / 10^{-4}$ \\
			\bottomrule
		\end{tabular}%
	\end{adjustbox}
	\label{tab:A2}%
\end{table*}%

\begin{table*}
	\centering
	\caption{Architectures of models used}
	\label{Table:tab3}
	\begin{adjustbox}{max width=\textwidth}
		\begin{tabular}{|L|L|L|L|L|L|}
			\toprule
			\textbf{Model} & {\textbf{No. of units per layer $P_\theta(w|\mathbf{x})$ or $P_\theta(\mathbf{z}|\mathbf{x})$} } & {\textbf{No. of units per layer $P_\phi(\mathbf{y}|\mathbf{x})$}} & {\textbf{No. of cells per layer $P_\psi(\mathbf{y}|\mathbf{x})$}} & \textbf{Activation} & {\textbf{Early stopping epoch / learning rate}} \\
			\midrule
			\multicolumn{6}{|c|}{\textbf{German credit - age}} \\
			\midrule
			\textbf{DI - NN} & - & 37/24/1 & - & ReLU + Batch normalization  + sigmoid (last layer) & $10 / 10^{-3}$ \\
			\midrule
			\textbf{Reweighing - NN} & - & 37/24/1 & - & ReLU + Batch normalization  + sigmoid (last layer) & $10 / 10^{-3}$ \\
			\midrule
			\textbf{FAD} & 37/24/1 & 16/1 & 16/1 & ReLU + Batch normalization  + sigmoid (last layer) & $50 / 10^{-4}$ \\
			\midrule
			\textbf{FAD-prob} & {28/14/14/14} & 7/1 & 7/1 & ReLU   + sigmoid (last layer) & $50 / 10^{-5}$ \\
			\midrule
			\textbf{FAIR-scalar} & 37/1 & 37/1 & 1  & ReLU + Batch normalization  + sigmoid (last layer) & $50 / 10^{-4}$ \\
			\midrule
			\textbf{FAIR-betaSF} & 37/2 & 37/1 & 1  & ReLU + Batch normalization  + sigmoid or exp (last layer) & $50 / 10^{-5}$ \\
			\midrule
			\textbf{FAIR-betaREP} & 37/2 & 37/1 & 1  & ReLU + Batch normalization  + sigmoid or exp (last layer) & $50 / 10^{-5}$ \\
			\midrule
			\textbf{FAIR-Bernoulli} & 37/1 & 37/1 & 1  & ReLU + Batch normalization  + sigmoid (last layer) & $50 / 10^{-4}$ \\
			\bottomrule
		\end{tabular}%
	\end{adjustbox}
	\label{tab:A3}%
\end{table*}%

\section{Additional results}
\label{app:Additional results}

More detailed results of experimental evaluation are given in Figs.~\ref{fig:Adult all}, ~\ref{fig:Readmission all},~\ref{fig:MEPS19 all},~\ref{fig:Ger_age all} and~\ref{fig:Ger_sex all} by presenting Pareto fronts with all dominated and non-dominated models.

\begin{figure*}
	\center
	\includegraphics[angle=0, width=1\textwidth]{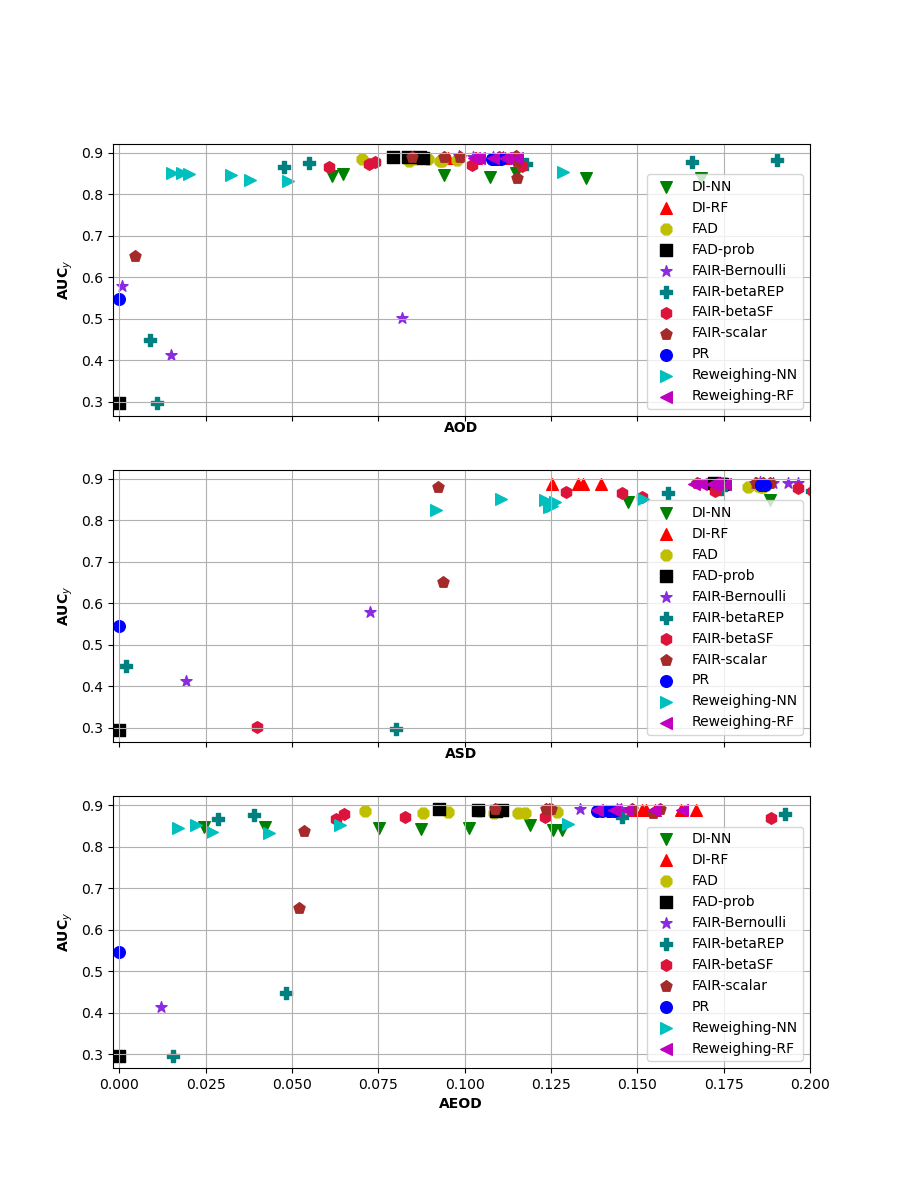}
	\captionsetup{justification=centering}
	\caption{Classification performance and fairness of models as measured by $\mathbf{AUC_y}$ and $\mathbf{AOD}$, $\mathbf{ASD}$ or $\mathbf{AEOD}$ on the \textit{Adult income} dataset}
	\label{fig:Adult all}
	\vskip -0.2in
\end{figure*}

\begin{figure*}
	\center
	\includegraphics[angle=0, width=1\textwidth]{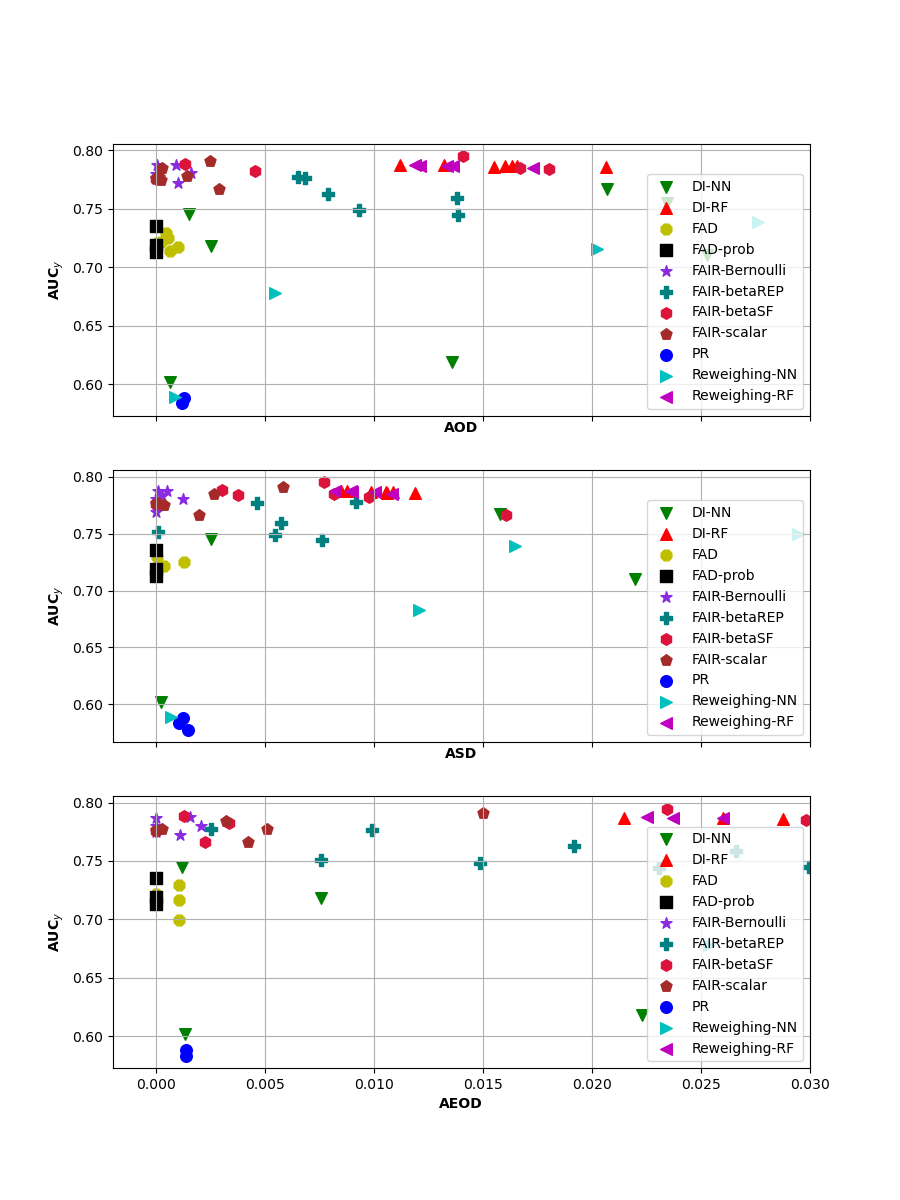}
	\captionsetup{justification=centering}
	\caption{Classification performance and fairness of models as measured by $\mathbf{AUC_y}$ and $\mathbf{AOD}$, $\mathbf{ASD}$ or $\mathbf{AEOD}$ on the \textit{Hospital readmission} dataset}
	\label{fig:Readmission all}
	\vskip -0.2in
\end{figure*}

\begin{figure*}
	\center
	\includegraphics[angle=0, width=1\textwidth]{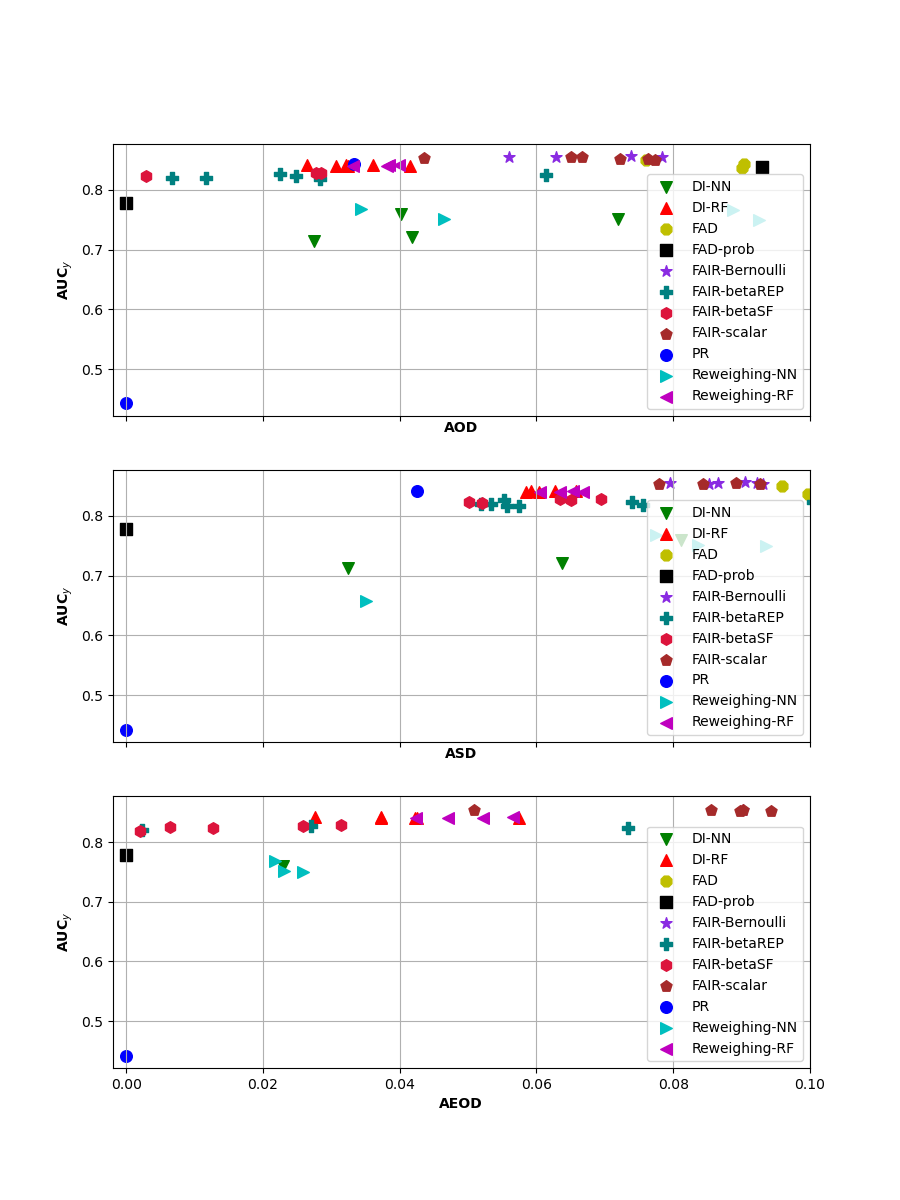}
	\captionsetup{justification=centering}
	\caption{Classification performance and fairness of models as measured by $\mathbf{AUC_y}$ and $\mathbf{AOD}$, $\mathbf{ASD}$ or $\mathbf{AEOD}$ on the \textit{Hospital expenditures} dataset}
	\label{fig:MEPS19 all}
	\vskip -0.2in
\end{figure*}

\begin{figure*}
	\center
	\includegraphics[angle=0, width=1\textwidth]{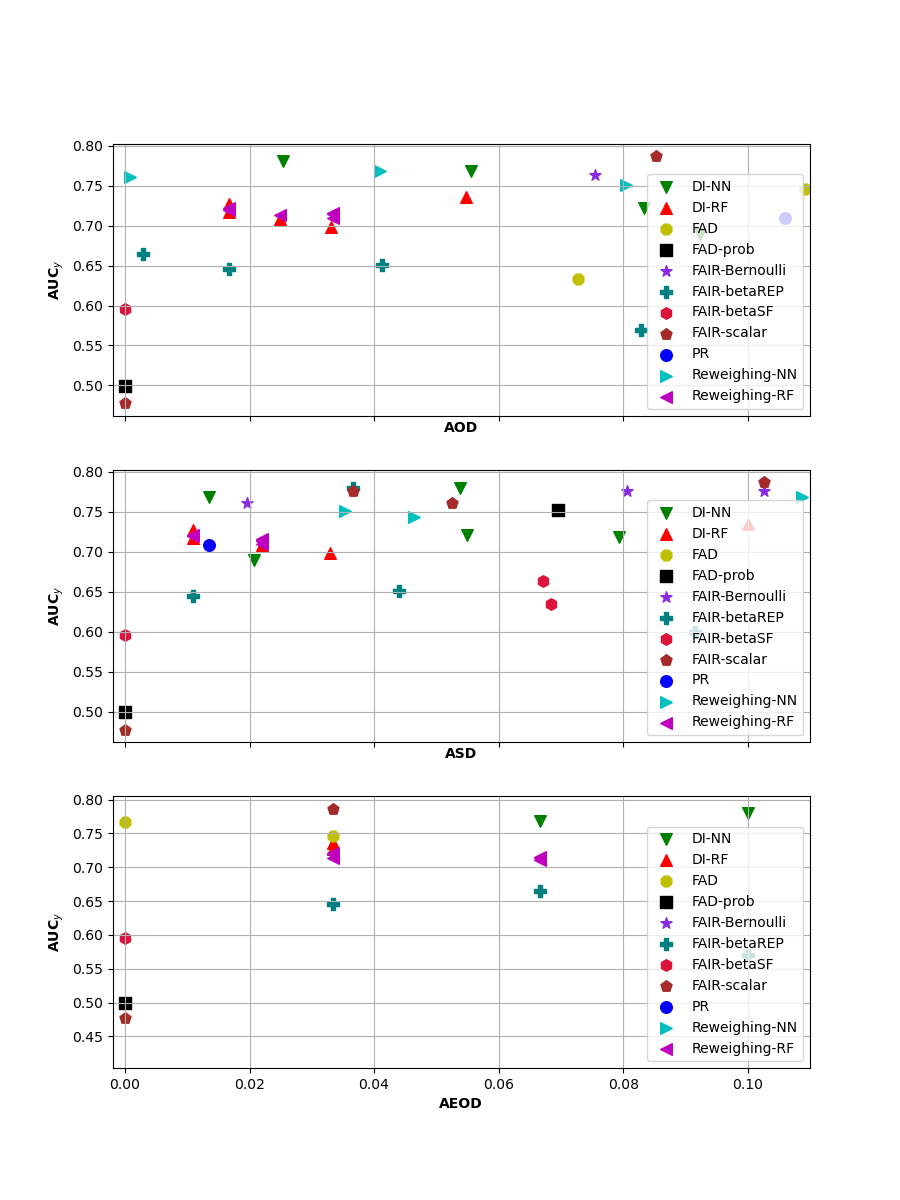}
	\captionsetup{justification=centering}
	\caption{Classification performance and fairness of models as measured by $\mathbf{AUC_y}$ and $\mathbf{AOD}$, $\mathbf{ASD}$ or $\mathbf{AEOD}$ on the \textit{German credit} (age) dataset}
	\label{fig:Ger_age all}
	\vskip -0.2in
\end{figure*}

\begin{figure*}
	\center
	\includegraphics[angle=0, width=1\textwidth]{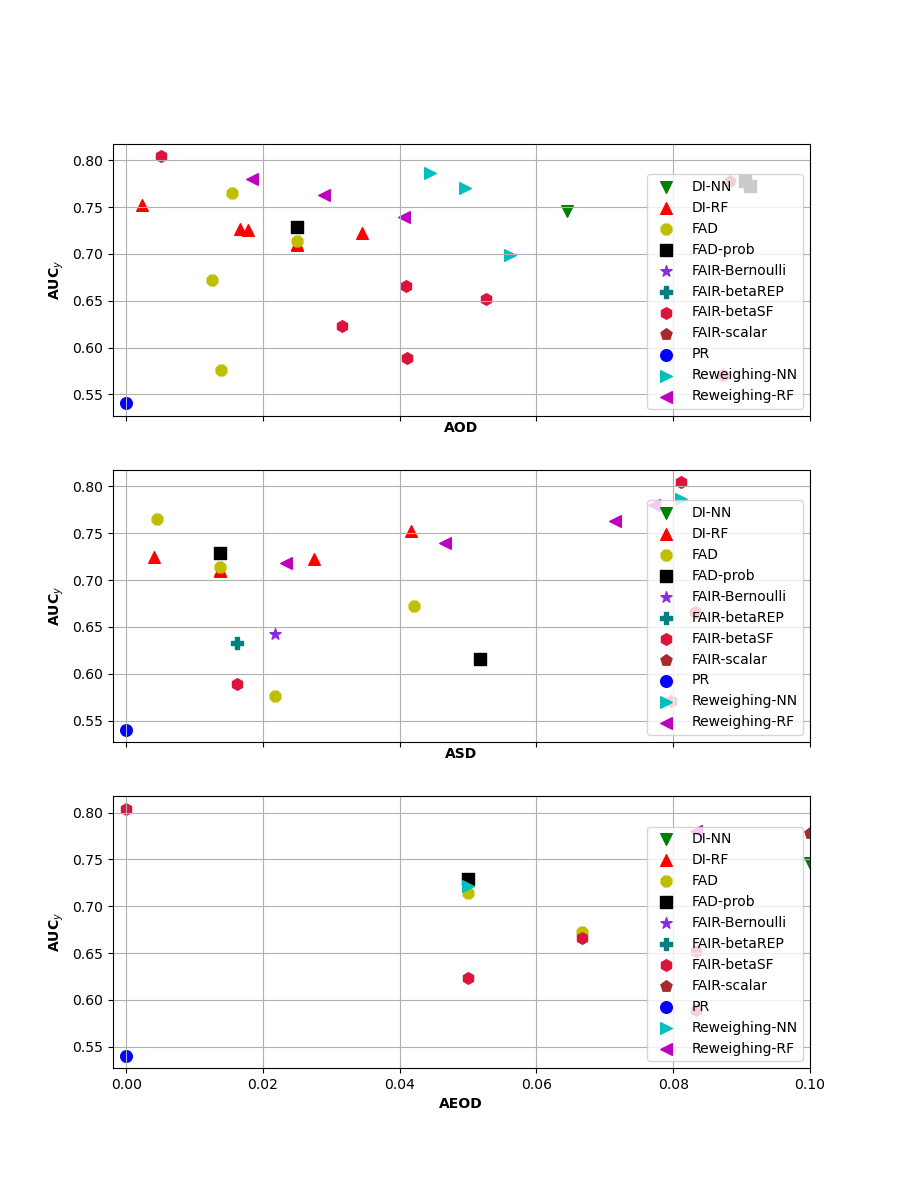}
	\captionsetup{justification=centering}
	\caption{Classification performance and fairness of models as measured by $\mathbf{AUC_y}$ and $\mathbf{AOD}$, $\mathbf{ASD}$ or $\mathbf{AEOD}$ on the \textit{German credit} (sex) dataset}
	\label{fig:Ger_sex all}
	\vskip -0.2in
\end{figure*}

\end{document}